\documentclass[accepted]{uai2026} 
                        
\usepackage[american]{babel}

\usepackage{natbib} 
\usepackage{mathtools} 
\usepackage{booktabs} 
\usepackage{tikz} 

\usepackage{amsmath,amsfonts,bm}

\def\eqref#1{equation~\ref{#1}}

\def\1{\bm{1}}

\def\vone{{\bm{1}}}

\def\vf{{\bm{f}}}

\DeclareMathAlphabet{\mathsfit}{\encodingdefault}{\sfdefault}{m}{sl}
\SetMathAlphabet{\mathsfit}{bold}{\encodingdefault}{\sfdefault}{bx}{n}

\def\gC{{\mathcal{C}}}
\def\gD{{\mathcal{D}}}

\def\gF{{\mathcal{F}}}

\def\gH{{\mathcal{H}}}
\def\gI{{\mathcal{I}}}

\def\gM{{\mathcal{M}}}

\def\gO{{\mathcal{O}}}
\def\gP{{\mathcal{P}}}

\def\gS{{\mathcal{S}}}

\def\sP{{\mathbb{P}}}

\newcommand{\R}{\mathbb{R}}

\usepackage{amsmath}
\usepackage{amssymb}
\usepackage{mathtools}
\usepackage{amsthm}
\usepackage{booktabs} 
\usepackage{hyperref}
\usepackage[capitalize,noabbrev]{cleveref}
\usepackage{algorithm}
\usepackage{algorithmic}
\usepackage{pgfplots}
\usepackage{pgfplotstable}
\usepgfplotslibrary{groupplots}
\usepackage{xcolor}

\usepackage[table]{xcolor}
\usepackage{booktabs}
\usepackage{multirow}

\usepackage{float}
\theoremstyle{plain}
\newtheorem{theorem}{Theorem}[section]
\newtheorem{proposition}[theorem]{Proposition}
\newtheorem{lemma}[theorem]{Lemma}
\newtheorem{corollary}[theorem]{Corollary}
\theoremstyle{definition}
\newtheorem{definition}[theorem]{Definition}
\newtheorem{assumption}[theorem]{Assumption}
\theoremstyle{remark}

\usepackage[textsize=tiny]{todonotes}
\usepackage{xr-hyper}    
\usepackage{multirow}
\makeatletter
\let\@origaddcontentsline\addcontentsline
\newcommand{\StopWritingToTOC}{%
  \let\addcontentsline\@gobblethree
}
\newcommand{\StartWritingToTOC}{%
  \let\addcontentsline\@origaddcontentsline
}
\makeatother

\definecolor{lightblue}{RGB}{173, 216, 230}

\title{Leveraging Data Symmetries to Select an Optimal Subset of Training Data under Label Noise}

\author{{Kumar Shubham}}
\author{Pavan Karjol}
\author{ Kiran M K}
\author{Prathosh AP}

\affil{%
    Indian Institute of Science, Bangalore, India }

\StopWritingToTOC
\begin{document}
\maketitle

\begin{abstract}
 The performance of machine learning models often relies on large labeled datasets; however, data collected from diverse sources can contain label noise. Recent work has shown that, in noisy settings, there may exist a subset of the training data on which models can achieve performance comparable to training on a noise-free dataset. A widely used method for identifying such subsets is \textit{cutstats}, which employs $k$-nearest neighbors ($k$-NN) to detect low-noise samples. However, its performance on high-dimensional data remains largely unexplored. In this work, we formally establish that the performance of a classifier trained on a subset of a noisy dataset selected via \textit{cutstats} is influenced by the accuracy of $k$-NN. We further demonstrate that, in noisy environments, exploiting data invariance and knowledge of underlying symmetries can significantly enhance the performance of $k$-NN, bringing it closer to the Bayes optimal classifier even in high-dimensional regimes. Finally, we show that for real-world scenarios, where information about the underlying invariance is only partially known, learnt invariant representations can still facilitate the identification of near-optimal subsets. 

 \end{abstract}
\section{Introduction}

 For modern AI-based applications, large labeled datasets~\citep{imagenet,hypersim-data,celeb-500k} have become essential to achieve good performance. However, manually labeling such datasets is both costly and time-consuming~\citep{ratner2016data,chen2021building}. To alleviate this burden, recent works have proposed low-cost annotation strategies, including rules, heuristics, or lightweight models for generating approximate labels~\citep{distant_supervision_heuristic, distant_supervision_KB, PWS_survey}. While these approaches improve scalability, they frequently introduce label noise, which can substantially degrade the performance of the model.

Several techniques have been proposed to improve the model's performance under noisy labels~\citep{ghosh2017robust, wang2019imae, wang2019symmetric, yi2022learning, chen2024erase, zhou2024l2b, kim2024learning, gao2021searching, liao2022empirical}. However, these methods primarily focus on enhancing the robustness of the model, while assuming that the noisy training data remains fixed and cannot be altered. In practice, however, datasets often carry significant proprietary value~\citep{xiong2022recognition} and may be reused across multiple tasks or even traded as assets~\citep{de2015data,zech2017data}. This makes it crucial to address label noise directly at the data level, where identifying and removing corrupted samples yields cleaner subsets that can be reused across diverse tasks.

To address this, \citet{subset_ws_neurips22} recently demonstrated that, even in the presence of label noise, there exists an optimal subset of training data on which models can achieve performance comparable to their noiseless counterparts, subject to certain concentration bounds. To identify such a subset, they proposed \textit{cutstats}~\citep{muhlenbach2004identifying}, a $k$-nearest neighbor ($k$-NN)–based technique that selects candidates by comparing each sample’s features and labels with those of its neighborhood. Unlike other data-centric approaches~\citep{jiang2018mentornet,goel2022pars,li2020dividemix}, this method relies solely on the training data and its feature space, thereby avoiding any prior knowledge of the downstream classifier that will eventually be trained on the selected subset.

However, despite its promise, the effectiveness of \textit{cutstats} on high-dimensional data, and its reliability in recovering the theoretically optimal subset of training data~\citep{subset_ws_neurips22}, remains largely unexplored. In this work, we investigate the performance gap between classifiers trained on noisy subsets and those trained on noiseless datasets in high-dimensional settings. 

We show that, for \textit{cutstats}, this gap is determined by the accuracy of $k$-NN in identifying noiseless samples. In low-dimensional regimes, $k$-NN can approximate the Bayes classifier\footnote{This equivalence with the Bayes classifier holds for specific noise rates and with certain probability, as shown by \citet{bahri2020deep}. } (the theoretically optimal classifier) and thereby recover optimal subsets. In high-dimensional regimes, however, this approximation may not hold, and hence can lead to the selection of suboptimal subsets.

To address this limitation, we show that incorporating prior knowledge of symmetries and their associated invariances can preserve the effectiveness of $k$-NN in high-dimensional settings. Invariances like rotation or permutation can capture the structured patterns in the data and, therefore, can be leveraged to filter out noisy samples. By constructing representations that explicitly encode these invariances, we theoretically establish that the approximation with the Bayes classifier is preserved for orthogonal or permutation-based symmetries, thereby enabling the recovery of optimal subsets in higher dimensions. To extend this idea to real-world scenarios, where information about underlying symmetries is often incomplete or entirely unknown, we show that even partial knowledge of invariances or invariant representations learned through methods such as group-invariant representation learning~\citep{winter2022unsupervised} or contrastive learning~\citep{chen2020simple} can substantially enhance the performance of \textit{cutstats}.

Our contribution can be summarized as follows: 

\begin{itemize}
\item We show that the performance gap of a classifier trained on a subset of the training data selected using \textit{cutstats} is governed by the accuracy of $k$-nearest neighbors ($k$-NN) under label noise. Our analysis further shows that $k$-NN can approximate the Bayes classifier in such settings, thereby enabling optimal subset selection.
\item We provide a theoretical analysis of how this approximation with the Bayes classifier breaks down in high-dimensional regimes, where optimal subset selection is no longer guaranteed.
\item We demonstrate that exploiting symmetries and associated invariances, specifically orthogonal and permutation symmetries, can restore this approximation and ensure the recovery of optimal subsets.

\item For practical settings, we demonstrate that invariant representations learned through group-equivariant networks or contrastive learning can significantly enhance the performance of \textit{cutstats}, enabling the selection of accurate subsets of the dataset.   
\end{itemize}

\section{Related Work}

\subsection{Handling Label Noise}

 A common approach to mitigate the effect of noise in training datasets is to improve the robustness of the model. One widely used strategy for this task is to train models with noise-aware loss functions~\citep{ghosh2017robust,zhang2018generalized,wang2019symmetric,gao2021searching}, which prevent overfitting to noisy labels and enable better generalization to unseen samples.  However, these methods often suffer from optimization challenges and require extensive hyperparameter tuning, making them difficult to apply in practice. Moreover, they overlook the proprietary value of data and do not provide a mechanism for removing noisy samples from the dataset.

Unlike loss function-based techniques, recent data-centric approaches~\citep{jiang2018mentornet,goel2022pars} aim to improve model robustness not only by incorporating loss-based strategies but also by selecting appropriate training samples. Many of these methods~\citep{han2018co,yu2019does,zhang2021understanding,malach2017decoupling} adopt a co-training strategy that seeks to identify appropriate samples to improve the robustness of the model. On a similar line, self-supervised learning(SSL) techniques improve the performance of the model~\cite{zhang2019metacleaner} using relabeling via co-teaching~\cite{mandal2020novel}, and selecting or filtering samples during training~\citep{zheltonozhskii2022contrast,tan2021co}.  However, such approaches often require training multiple networks, which leads to increased memory usage and higher computational costs. 

To address these limitations, various strategies have been proposed to select a clean subset of the training dataset. Recent work has shown that, in the presence of noise, a representative subset of data points~\citep{guo2022deepcore} or samples with low predictive uncertainty~\citep{subset_ws_neurips22} can be selected from the training set. Similarly, identifying samples that are frequently forgotten during training can also be used to remove detrimental noisy samples~\citep{toneva2018empirical}, thereby enabling the selection of high-quality subsets of the training dataset. Among these approaches, \textit{cutstats}~\citep{subset_ws_neurips22} stands out for its simplicity, as it selects subsets of the training data solely based on input features and their associated labels, without any dependency on the downstream classifier. Such methods are particularly relevant in enterprise and crowdsourcing settings~\citep{PWS_survey}, where datasets are curated without prior knowledge of the downstream models that will be trained on them. In these scenarios, classifier-agnostic subset selection is essential to ensure broad applicability and robustness across diverse learning tasks. However, despite its practical advantages, the effect of data dimensionality on the performance of \textit{cutstats} remains largely unexplored.

\subsection {Classifier Performance and Group Invariance}
Real-world datasets often exhibit different forms of symmetry, where the underlying properties of the dataset remain unaltered for translation, rotation, and permutation~\citep{bronstein2021geometric,lecun2004learning} based transformations. Exploiting these symmetries can significantly improve the performance of machine learning models, particularly in domains where structured patterns are present. Recent advancements in equivariant neural networks have demonstrated that encoding symmetries, associated with rotation or permutation-based transformations, into model architecture can enhance its performance~\citep{ghimire2020learning,jiang2023anatomical,bronstein2021geometric}. In contrast to these approaches, we propose a data-centric deletion method that leverages the underlying invariance in a dataset to select an optimal training subset in the presence of label noise.

\section{Methods}
\subsection{Problem Definition}
\label{sec:prob_defn}
Let the noiseless dataset $\gD =\{(x_i,y_i)\}_{i=1}^s $, where each input $x_i \in \mathbb{R}^d$ is a $d$-dimensional feature vector drawn from an underlying distribution $\gP$, and the corresponding label ($y_i$) is generated by true-labeling function $\vf: \mathbb{R}^d \to \{0,1\}$ such that $y_i = \vf(x_i)$.

Suppose the data admits an underlying symmetry, represented by a group $G$ such that class labels are invariant under the group action, i.e.,: 
\[
\vf(g \cdot x) = \vf(x) \quad \forall g \in G,
\]
where ($\cdot$) denotes group action.  
For a group $G$, the orbit of any input $x$ is defined as 
\[
\gO_x = \{ g \cdot x : g \in G \}.
\] 
Let $\gF$ be an invariant representation function associated with $G$, such that all points in the same orbit share the same representation: 
\[
\gF(x) = \gF(x') \quad \text{whenever } x' \in \gO_x.
\] 
Moreover, $\gF$ retains all class-relevant information ~\citep{achille2018emergence,achille2018information}.

In practice, the true labeling function $\vf$ is not observable, and instead we have access to noisy approximations of the labels.  
Let $\gD_{\text{noisy}} = \{(x_i, \hat{y}_i) \mid i=1,\dots,s \}$, where $\hat{y}_i$ denotes a noisy version of the ground-truth label.  

The goal of this work is to identify a subset $\gS \subseteq \gD_{\text{noisy}}$ using the \textit{cutstats} procedure, while leveraging the invariant representation function $\gF$, such that a classification model $\gM$ trained on $\gS$ approximates the true labeling function($\vf$).


\begin{figure}[ht!]
\centering
    \includegraphics[width=0.47\textwidth]{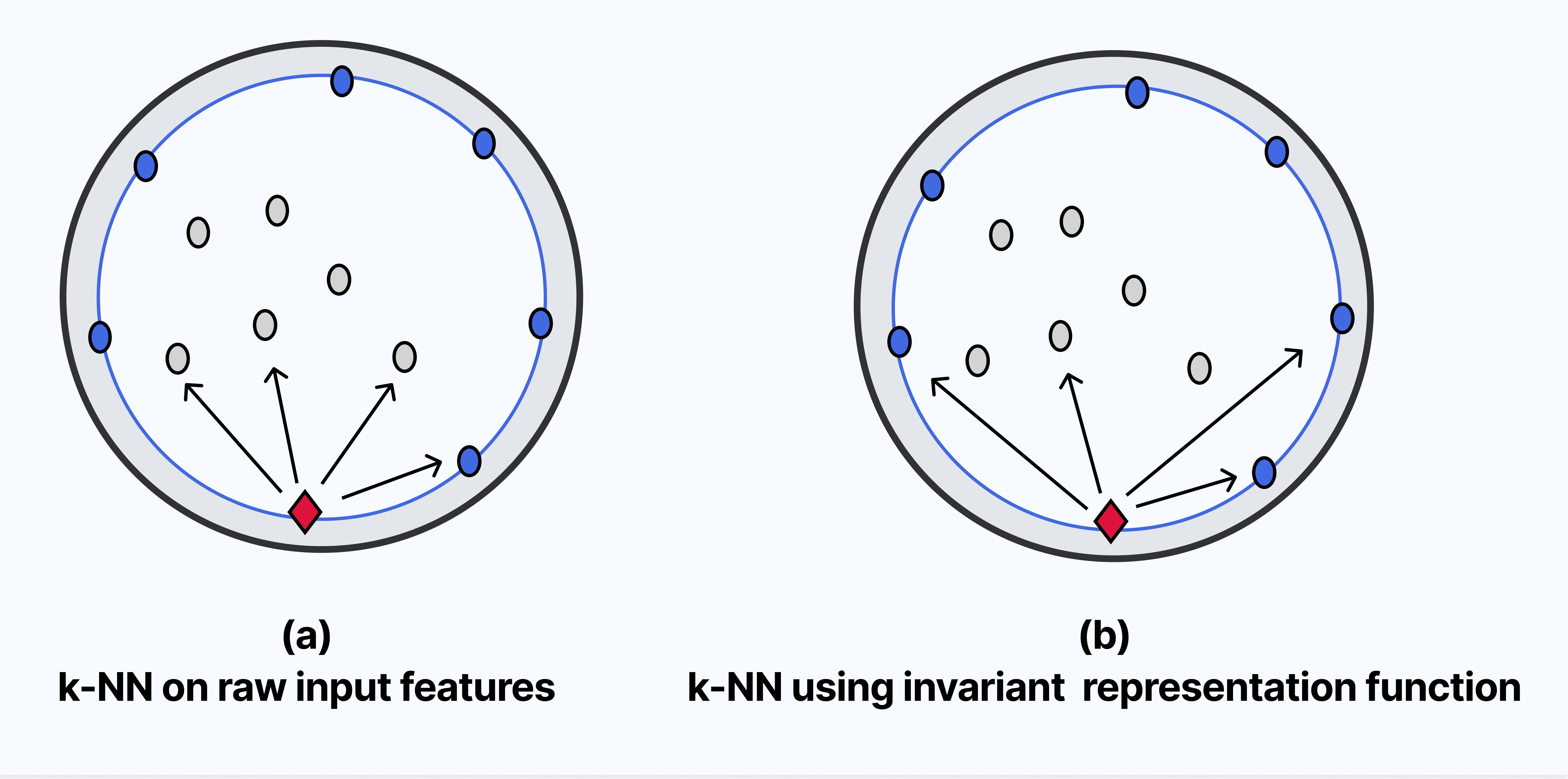}
    \caption{Given image depicts the Nearest neighbors selected by different methods under Orthogonal group invariance in the dataset. Data points near the circumference of the circle constitute an orbit and can share the same class labels. (a) The method that utilizes standard k-NN using input features can select the nearest neighbors that do not belong to the orbit(Section~\ref{sec:prob_defn}). (b) The method that utilizes an invariant representation function can select the nearest neighbor from the orbit. }
    \label{fig:orbit}
\end{figure}

\subsection{Methodology}
To achieve this objective, we first theoretically analyze the error gap of a model trained using a subset of a noisy dataset with its noiseless counterpart. 
For practical scenarios where the noiseless datasets are not available, we show that the resulting error gap for \textit{cutstats}, depends on the accuracy of $k$-NN in identifying correct samples from the noisy dataset. 
In low-dimensional settings, it can closely approximate the Bayes-optimal classifier and, consequently, achieve near-optimal classification performance. However, this approximation might not hold as the dimensionality increases. 

We then demonstrate that incorporating the invariant representation $\gF$ into subset selection can restore this approximation in high-dimensional regimes, particularly for orthogonal and permutation groups. 
n practical scenarios where the underlying invariances are unknown or only partially specified, we show that group-invariant representation learning or contrastive learning can provide effective approximations. 
In the following section, we formally discuss the \textit{cutstats} technique in detail and describe how knowledge of $\gF$ can be integrated into the subset selection process.
 
\subsection{Cutstats}
\label{sec:cutstats}
 \textit{Cutstats} provides an efficient technique for selecting a subset of the noisy training dataset ($\gD_{noisy}$) based on the agreement between the noisy class label ($\hat{y}_i$) of a sample ($x_i$) and the labels of its nearest neighbors $\text{NN}(x_i)$. In \textit{cutstats}, the selection of the subset $\gS \subseteq \gD_{noisy}$ is based on the $Z$-score ($z_i$), defined as:

\begin{align}
z_i &= \frac{J_i - \mu_i}{\sigma_i}, \nonumber \\
J_i &= \sum\limits_{j \in \text{NN}(x_i)} w_{i,j} \vone\big(\hat{y}_j \neq \hat{y}_i\big), \nonumber \\
w_{i,j} &= \big(1 + \|x_i - x_j\|_2\big)^{-1}, \nonumber \\
\mu_i &= \big[1 - \sP(\hat{y}_i)\big] \sum_{j \in \text{NN}(x_i)} w_{i,j}, \nonumber \\
\sigma_i^2 &= \sP(\hat{y}_i)\big[1 - \sP(\hat{y}_i)\big] \sum_{j \in \text{NN}(x_i)} w_{i,j}^2
\label{eqn:z-score}
\end{align}

where the weights $w_{i,j}$ capture feature similarity between $x_i$ and its neighbors $x_j \in \text{NN}(x_i)$. The score $J_i$ measures the weighted disagreement with neighbors’ labels, while $\mu_i$ and $\sigma_i^2$ denote the expected mean and variance of $J_i$ under the null hypothesis of independent label assignment, estimated using the label distribution \big($\sP(\hat{y})$\big) in the training dataset $\gD_{noisy}$.

Small $z_i$ values indicate strong agreement between a sample and its neighborhood, whereas large $z_i$ values suggest disagreement between a sample’s label and that of its nearest neighbor, signaling potential label noise.

While the classical  \textit{cutstats} uses raw input features~\citep{subset_ws_neurips22} to define weights and nearest neighbors. The invariant representation function ($\gF$) can be incorporated for calculating the weights and identifying the nearest neighbor as follows:

\begin{align}
w_{i,j} &= \big(1 + \| \gF(x_i) - \gF(x_j) \|_2\big)^{-1}
.
\end{align}

where the neighborhood is now defined as   $NN(x_i) = \{\, x_j : x_j$  is a nearest neighbor of  $x_i$  with respect to the  $\ell_2$  distance in the representation space $ \gF \}$. 

For subset selection, samples are ranked by $z_i$, and the top $\tau\%$ with the smallest values are retained in $\gS$, while the remaining samples are excluded.

Once a subset of the noisy training dataset has been selected, a downstream classifier ($\gM$) is trained on it and subsequently used to predict the true class labels of new, unseen samples.

For ease of our theoretical discussion, we assume that  \textit{cutstats} considers $k$-nearest neighbors (k-NN) with balanced class priors, i.e., $\sP(\hat{y}=0) = \sP(\hat{y}=1) = \tfrac{1}{2}$. Since  \textit{cutstats} assigns smaller $z$-values when the majority of neighbors share the same class as the selected data point, our analysis will primarily focus on the nearest neighbors and their labels. Under the given assumption, we will next analyze the generalization performance of the selected subset of the noisy dataset.


\subsection{Performance of Selected Subset and Noiseless Dataset}
The generalization gap between a classifier trained on a subset of noisy data and those trained on the noiseless data has been formally characterized by \citet{subset_ws_neurips22}. In their framework, a subset selection heuristic (e.g.,  \textit{cutstats}) may abstain from including a training sample $(x,\hat{y})$ in subset ($\gS$) by assigning it the special label $\phi$ (i.e., $\hat{y}=\phi$, where $\hat{y} \in \{0,1,\phi\}$). In such a setting, \citet{subset_ws_neurips22} formally  present a bound on the performance of a classifier using balanced error ($err_{bal}$), defined as : 
\[
\scalebox{0.8}{$\displaystyle err_{bal}(\mathcal{M},y)=\frac{1}{2}\bigg(\mathbb{P}(\mathcal{M}(x)=1 \mid y=0)+\mathbb{P}(\mathcal{M}(x)=0 \mid y=1)\bigg)$}
\]

For a classifier $\gM$ trained on noisy labels $\hat{y}$, this balanced error can be compared against that of an optimal classifier $\gM^*$ trained on noiseless data. The performance gap largely depend upon the non-abstention rate $\sP(\hat{y} \neq \phi)$ and the noise rates of the selected subset ($\alpha, \gamma$). The complete formulation is provided below :

\begin{lemma}
{\citep{subset_ws_neurips22}}
\label{Thm:subset_conc_bound}
Under the assumptions outlined by \citet{subset_ws_neurips22}. Let $\Gamma$ denote the hypothesis class of the downstream model used for training the classifier. Furthermore, let $\gM^*$ represent the optimal classifier trained on noiseless class labels \((y)\), defined as 
\[
    \gM^* = \inf_{\gM \in \Gamma} \text{err}_{\text{bal}}(\gM, y),
\]
 and \(\gM\) denote the classifier obtained by minimizing the empirical balanced error on the noisy dataset ($\gD_{noisy}$) with m samples. Then, with probability \(1 - \delta\),
\begin{align}
    &\text{err}_{\text{bal}}(\gM, y) - \text{err}_{\text{bal}}(\gM^*, y) \notag \\
    &\leq \tilde{O} \left(\frac{1}{1 - \alpha - \gamma} 
    \sqrt{\frac{\text{VC}(\Gamma) + \log \frac{1}{\delta}}{m  \sP[\hat{y} \neq \phi]  \min_y \sP[\hat{y}  = y \mid \hat{y} \neq \phi]}} \right),
\end{align}
where \(\tilde{O}\) hides logarithmic factors in \(m\), \(\text{VC}(\Gamma)\), and 
\[
\alpha =  \sP(y=0 | \hat{y}=1), \gamma = \sP(y=1 | \hat{y}=0)
\]denoted the error rates in the selected subset. 
\end{lemma}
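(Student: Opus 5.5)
The plan is to prove the bound in three stages. First, relate the balanced error under the noisy, non-abstained labels to the balanced error under the clean labels. Second, apply a uniform convergence argument on the $\approx m\,\sP[\hat{y}\neq\phi]$ retained samples. Third, transfer the result back through the first relation. Throughout, we work under the assumptions of \citet{subset_ws_neurips22}. The key one is that, conditionally on the true label $y$ and on non-abstention, the noisy label $\hat{y}$ is independent of $x$ (a class-conditional noise model on the selected subset). We also assume the hypothesis class $\Gamma$ has finite VC dimension.

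\textbf{Step 1: linear relation between noisy and clean balanced error.} Restrict to the event $\hat{y}\neq\phi$ and write the noisy balanced error as
\[
\text{err}_{\text{bal}}(\gM,\hat{y})=\tfrac12\bigl(\sP(\gM=1\mid \hat{y}=0)+\sP(\gM=0\mid\hat{y}=1)\bigr).
\]
Decompose each conditional probability by conditioning on $y$. With $\alpha=\sP(y=0\mid\hat{y}=1)$ and $\gamma=\sP(y=1\mid\hat{y}=0)$, and using conditional independence of $\gM(x)$ and $\hat{y}$ given $y$, we get
\[
\sP(\gM=1\mid\hat{y}=0)=(1-\gamma)\,\sP(\gM=1\mid y=0)+\gamma\,\sP(\gM=1\mid y=1),
\]
and symmetrically for the other term. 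Summing the two terms, the noisy balanced error should become an affine function of the clean one:
\[
\text{err}_{\text{bal}}(\gM,\hat{y})=(1-\alpha-\gamma)\,\text{err}_{\text{bal}}(\gM,y)+c,
\]
where $c$ does not depend on $\gM$. Verifying that the constant is truly $\gM$-independent is the delicate part. The cross terms must cancel, and this uses the balanced-error normalization; if it fails exactly, I would use the analogous identity from \citet{subset_ws_neurips22}. Given the identity, any excess noisy balanced error is scaled by $1-\alpha-\gamma$. Hence the clean excess error is at most $(1-\alpha-\gamma)^{-1}$ times the noisy excess error, provided $\alpha+\gamma<1$.

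\textbf{Step 2: uniform convergence on the retained subset.} Conditioned on $\hat{y}\neq\phi$, the number of retained points is binomial with mean $m\,\sP[\hat{y}\neq\phi]$. Within the retained points, the class with fewer labels has about $m\,\sP[\hat{y}\neq\phi]\min_y\sP[\hat{y}=y\mid\hat{y}\neq\phi]$ samples. The balanced error is an average of two class-conditional errors. Apply a VC uniform deviation bound to each class-conditional error, e.g. $\sqrt{(\text{VC}(\Gamma)+\log(1/\delta))/n_y}$ up to logs, where $n_y$ is the number of retained points with label $y$. The smallest class count controls the bound. A Chernoff bound ensures $n_y$ is at least a constant fraction of its mean with probability $1-\delta$, at the cost of log factors absorbed in $\tilde O$. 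The empirical minimizer $\gM$ then has noisy excess balanced error bounded by twice this deviation.

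\textbf{Step 3: combine.} Take $\gM^*$ to be the clean-optimal classifier, which by Step 1 is also noisy-optimal. Then the clean excess error $\text{err}_{\text{bal}}(\gM,y)-\text{err}_{\text{bal}}(\gM^*,y)$ equals $(1-\alpha-\gamma)^{-1}$ times the noisy excess. Substituting the Step 2 bound gives the stated rate. I expect the main obstacle to be Step 1. Justifying the $\gM$-independence of the constant and the conditional independence needed for the mixture decomposition requires the assumptions of \citet{subset_ws_neurips22} on how abstention interacts with the noise. The random, label-dependent sample sizes in Step 2 are a secondary, purely technical issue.
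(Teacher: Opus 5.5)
Your proposal is correct and takes essentially the same route as the proof the paper relies on; the paper gives no proof of its own and imports the lemma from \citet{subset_ws_neurips22}, whose argument likewise combines the affine relation between noisy and clean balanced error under class-conditional noise with per-class VC uniform convergence on the retained samples, then divides by $1-\alpha-\gamma$. The step you flag as delicate closes exactly: your two decompositions sum to $\text{err}_{\text{bal}}(\gM,\hat{y})=(1-\alpha-\gamma)\,\text{err}_{\text{bal}}(\gM,y)+\tfrac{\alpha+\gamma}{2}$ with no assumption on class priors. The one thing to state precisely is that identifying $\sP(\gM=1\mid y=c,\hat{y}\neq\phi)$ with the population quantity $\sP(\gM=1\mid y=c)$ needs the conditional independence to cover the abstention event as well, i.e.\ $\hat{y}\perp x\mid y$ with $\hat{y}\in\{0,1,\phi\}$, and not merely independence on the retained set.
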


As stated in Lemma~\ref{Thm:subset_conc_bound}, the performance of any downstream model trained on a noisy subset of data depends on two primary factors:  
(i) the level of noise present in the selected set ($\alpha, \gamma$), and  
(ii) the effective size of the dataset used for training, \big($ m \sP[\hat{y} \neq \phi] $\big).  

For any given noisy dataset with m training samples ($\sP(\hat{y} \neq \phi) = 1$) and the error rates $\alpha_{\text{noisy}}, \gamma_{\text{noisy}}$, an appropriate subset $\gS$ can be selected by abstaining from the noisy samples. This results into generation of a new dataset $\gS$ with error rates defined as $\alpha_{\gS}, \gamma_{\gS}$. For $k$-NN based methods, like \textit{cutstats}, the formal relationship between the error rates of the original data and the subset can be characterized as follows:


\begin{proposition}[Performance of $k$-NN]
\label{alg:performance}
Consider the setup of Lemma~\ref{Thm:subset_conc_bound}. 
 for $k$-NN ($\gH : x \mapsto y^{\text{knn}}$), with $y^{knn}$ as the class label of the majority of the samples in the nearest neighbor and
\[
\lambda_0 = \sP(y^{\text{knn}} = 0 \mid y=0), 
\qquad 
\lambda_1 = \sP(y^{\text{knn}} = 1 \mid y=1)
\] 
denote the class-wise accuracies of $k$-NN for class $0$ and class $1$, respectively.  
Then the error rate of the subset selected by $k$-NN satisfies:
\begin{align}
    \alpha_{\gS} &= \frac{\alpha_{noisy} (1 - \lambda_0)}{\alpha_{noisy} (1 - \lambda_0) + (1 - \alpha_{noisy})\lambda_1}, \nonumber \\ 
    \gamma_{\gS} &= \frac{\gamma_{noisy} (1 - \lambda_1)}{\gamma_{noisy} (1 - \lambda_1) + (1 - \gamma_{noisy})\lambda_0} \nonumber
\end{align}
\end{proposition}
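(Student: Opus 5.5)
The plan is to make the $k$-NN selection rule explicit and then apply Bayes' rule conditioned on the true label. Under the simplifications at the end of Section~\ref{sec:cutstats}, we take the subset to be $\gS=\{(x,\hat y): y^{\text{knn}}(x)=\hat y\}$: a sample is kept exactly when its noisy label agrees with the majority label of its $k$ nearest neighbours (smallest $z$-scores), and it receives $\phi$ otherwise. After selection, the noise rates are the posterior probabilities
\[
\alpha_{\gS}=\sP(y=0\mid \hat y=1,\ y^{\text{knn}}=1),\qquad
\gamma_{\gS}=\sP(y=1\mid \hat y=0,\ y^{\text{knn}}=0).
\]
These have the same form as the quantities $\alpha,\gamma$ in Lemma~\ref{Thm:subset_conc_bound}, now restricted to the non-abstained samples.

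For $\alpha_{\gS}$, I would write the posterior as a ratio of joint probabilities and split the denominator over the two values of $y$:
\[
\alpha_{\gS}=\frac{\sP(y=0,\hat y=1,y^{\text{knn}}=1)}{\sum_{c\in\{0,1\}}\sP(y=c,\hat y=1,y^{\text{knn}}=1)}.
\]
Next I factor each joint probability as
\[
\sP(\hat y=1)\,\sP(y=c\mid \hat y=1)\,\sP(y^{\text{knn}}=1\mid y=c,\hat y=1).
\]
The key modelling step is the conditional independence
\[
\sP(y^{\text{knn}}=1\mid y=c,\hat y=1)=\sP(y^{\text{knn}}=1\mid y=c).
\]
Granting this, the factors become:
\begin{itemize}
\item for $c=0$: $\alpha_{noisy}(1-\lambda_0)$;
\item for $c=1$: $(1-\alpha_{noisy})\lambda_1$.
\end{itemize}
The common factor $\sP(\hat y=1)$ cancels, which gives the stated formula for $\alpha_{\gS}$. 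The same computation with the roles of the classes swapped, using $\gamma_{noisy}=\sP(y=1\mid\hat y=0)$ and the kept event $\{\hat y=0,y^{\text{knn}}=0\}$, produces weights $\gamma_{noisy}(1-\lambda_1)$ and $(1-\gamma_{noisy})\lambda_0$. That yields the formula for $\gamma_{\gS}$.

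The main obstacle is justifying the conditional independence of $y^{\text{knn}}(x)$ and $\hat y(x)$ given $y$. I would argue it from the structure of $k$-NN. The majority vote depends only on the noisy labels of the neighbours $x_j\neq x$, and the query point's own label is excluded from $\text{NN}(x)$. We then assume, as in the setup of Lemma~\ref{Thm:subset_conc_bound}, that label noise is drawn independently across samples given the true labels (class-conditional noise). Under these conditions, once $y$ (and the locations of the points) is fixed, the vote of the neighbours carries no further information about the corruption of $\hat y(x)$.

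I would state this independence explicitly as an assumption inherited from the lemma's setting. I would also note that it is precisely what makes $\lambda_0,\lambda_1$, which are defined with respect to the true labels, the right accuracy parameters. Everything else in the argument is a routine application of Bayes' rule.
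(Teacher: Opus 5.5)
Your proof is correct and follows the same route as the paper. You define the kept event as $\{\hat y = y^{\text{knn}}\}$, expand $\alpha_{\gS}$ by Bayes' rule over $y\in\{0,1\}$, and use the conditional independence of $\hat y$ and $y^{\text{knn}}$ given $y$; the paper simply posits that independence as an assumption rather than motivating it from class-conditional noise. The one cosmetic difference is that you factor through $\sP(\hat y=1)\,\sP(y=c\mid\hat y=1)$ instead of $\sP(y=c)\,\sP(\hat y=1\mid y=c)$, which makes clear that the final cancellation does not need the equal-class-prior assumption the paper invokes.
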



Proofs of the propositions and corollaries are provided in the Appendix~\ref{proof:acc_heuristic}.

Intuitively, more accurate heuristics (with $\lambda_0, \lambda_1$ close to 1) introduce less noise during subset selection. This relationship is formally described below:

\begin{corollary}
\label{corr_main:acc_performance}
Suppose the class priors are balanced, i.e., $\sP(y=0) = \sP(y=1) = \tfrac{1}{2}$. If the $k$-NN classifier has error $\sP(y^{\text{knn}} \neq y) \leq \tfrac{1}{2}$, then the subset error rates satisfy
\[
    \alpha_{\gS} \leq \alpha_{noisy}
    \quad \text{and} \quad
    \gamma_{\gS} \leq \gamma_{noisy}.
\]
  Moreover, as $\lambda_{0}, \lambda_{1} \to 1$, the subset error rates also reduces:
\[
    \alpha_{\gS}, \, \gamma_{\gS} \;\to\; 0.
\]
\end{corollary}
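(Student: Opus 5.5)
The plan is to reduce both claims to elementary algebra on the closed forms of Proposition~\ref{alg:performance}. The only place the hypotheses enter is through one quantity, $\lambda_0+\lambda_1$. Under balanced priors $\sP(y=0)=\sP(y=1)=\tfrac12$, the law of total probability gives
\[
\sP(y^{\text{knn}}\neq y)=\tfrac12(1-\lambda_0)+\tfrac12(1-\lambda_1)=1-\tfrac12(\lambda_0+\lambda_1).
\]
So the hypothesis $\sP(y^{\text{knn}}\neq y)\le\tfrac12$ is exactly equivalent to $\lambda_0+\lambda_1\ge 1$, that is, $1-\lambda_0\le\lambda_1$ and $1-\lambda_1\le\lambda_0$.

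For the first claim I would take the formula for $\alpha_{\gS}$ and dispose of degenerate cases first. If $\alpha_{noisy}=0$ then $\alpha_{\gS}=0$ and there is nothing to prove. Assume next that $\alpha_{noisy}>0$ and that the denominator $D=\alpha_{noisy}(1-\lambda_0)+(1-\alpha_{noisy})\lambda_1$ is positive, which is needed for $\alpha_{\gS}$ to be defined at all. Dividing by $\alpha_{noisy}$ and multiplying by $D$, the inequality $\alpha_{\gS}\le\alpha_{noisy}$ becomes
\[
1-\lambda_0\le \alpha_{noisy}(1-\lambda_0)+(1-\alpha_{noisy})\lambda_1 .
\]
This rearranges to $(1-\alpha_{noisy})(1-\lambda_0)\le(1-\alpha_{noisy})\lambda_1$, which follows from $1-\lambda_0\le\lambda_1$ since $1-\alpha_{noisy}\ge 0$. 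The argument for $\gamma_{\gS}\le\gamma_{noisy}$ is identical with the roles of $\lambda_0$ and $\lambda_1$ swapped, and it uses $1-\lambda_1\le\lambda_0$.

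For the limit statement, the numerators $\alpha_{noisy}(1-\lambda_0)$ and $\gamma_{noisy}(1-\lambda_1)$ tend to $0$ as $\lambda_0,\lambda_1\to1$. The denominators tend to $1-\alpha_{noisy}$ and $1-\gamma_{noisy}$ respectively. Continuity of the quotient then gives $\alpha_{\gS},\gamma_{\gS}\to0$.

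The only delicate point, and the main thing to state carefully, is this nondegeneracy. The limit argument requires $\alpha_{noisy},\gamma_{noisy}<1$. If the noisy labels were entirely wrong, for example $\alpha_{noisy}=1$, the ratio equals $1$ identically and the limit fails. I would therefore state the mild assumption $\alpha_{noisy},\gamma_{noisy}<1$ explicitly, which is implicit in the setup since the noisy labels carry some information. I would also note that the monotonicity part holds without this assumption whenever the ratios are defined.
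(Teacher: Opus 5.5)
Your proposal is correct and takes essentially the same route as the paper. Both reduce the error hypothesis under balanced priors to $\lambda_0+\lambda_1\ge 1$, clear denominators in the closed forms for $\alpha_{\gS}$ and $\gamma_{\gS}$ (the paper packages this as $f(a)\le a \iff a(1-a)(u-v)\le 0$), and then pass to the limit. Your explicit treatment of the degenerate cases (a positive denominator, and $\alpha_{noisy},\gamma_{noisy}<1$ for the limit) makes precise what the paper leaves implicit in its restriction to $a\in(0,1)$, $u,v>0$.
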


Corollary~\ref{corr_main:acc_performance} and Lemma~\ref{Thm:subset_conc_bound} establish a relationship between the precision of $k$-NN in the presence of label noise and that of a downstream classifier trained on a subset of data selected by it. Since, in theory, the best performance achievable by any classifier corresponds to that of the Bayes classifier, in the next section we analyze the conditions under which $k$-NN can approximate the Bayes classifier in a noisy setting.

\subsection{Bayes optimal classifier and K-NN}

The relationship between the performance of $k$-NN and the Bayes classifier in the presence of label noise has been formally established by \citet{bahri2020deep}. According to their analysis, if a dataset contains $n$ uncorrupted samples and $\gC$ samples with label noise. Then, for  $k$-nearest neighbors classifier ($\eta_k(x)$), and the Bayes-optimal classifier ($\eta^*(x)$) following holds: 

\begin{lemma}\citep{bahri2020deep}
\label{main_thm:rate_tsyb_original}
Let $\nu > 0$, and assume the conditions of \citet{bahri2020deep} hold under the Tsybakov noise condition with parameter $\beta$. Then there exist constants $K_l(d), K_u(d), K, K' > 0$, depending only on the dimension $d$ and the underlying distribution, such that with probability at least $1 - \nu$ :

 If \( k \) lies in the range


\[
K_l(d) \cdot \log^2(1/\nu) \cdot n^{\frac{\rho}{\rho+d}} \leq k \leq K_u(d) \cdot \Delta(\mathcal{C})^d \cdot n,
\] Then,
\[
\mathbb{P} \big(\eta_k(x) \neq \eta^*(x)\big) \leq K \cdot \lambda^\beta,
\]
\[
R_X - R^* \leq K' \cdot \lambda^{\beta+1},
\]
where
\[
\lambda = \left( \sqrt{\frac{\log n + \log(1/\nu)}{k}} + \left( \frac{k}{n} \right)^{\rho/d} \right),
\]
  $R_X$ and $R^*$ denote the risks of the $k$-NN and the Bayes optimal classifier, respectively.
\end{lemma}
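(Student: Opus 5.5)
This lemma is a restatement of the main result of \citet{bahri2020deep}, so the proof reduces to the standard $k$-NN analysis in the presence of $\gC$ corrupted points. The plan has four steps.

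\emph{Step 1 (uniform control of neighborhood label averages).} Use a VC/uniform-concentration bound over balls to show that, with probability at least $1-\nu$, every ball $B$ carrying at least $k$ of the $n$ clean points has empirical clean-label average within $\sqrt{(\log n+\log(1/\nu))/k}$ of its population conditional mean. The lower bound $k \ge K_l(d)\log^2(1/\nu)\, n^{\rho/(\rho+d)}$ is what makes this uniform statement hold and keeps the $k$-NN radius small.

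\emph{Step 2 (bias and corruption terms).} Combine the density lower bound with the $\rho$-H\"older smoothness of $\eta$ to show two things:
\begin{itemize}
\item The $k$-NN radius is $O\bigl((k/n)^{1/d}\bigr)$, so the population mean over that ball differs from $\eta(x)$ by $O\bigl((k/n)^{\rho/d}\bigr)$.
\item The corrupted points lie at distance at least $\Delta(\gC)$ from the region of interest. The upper bound $k\le K_u(d)\Delta(\gC)^d n$ guarantees that the $k$-NN ball has radius below $\Delta(\gC)$, so no corrupted point enters the neighborhood.
\end{itemize}
Together these give $|\eta_k(x)-\eta(x)|\le C\lambda$ uniformly in $x$.

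\emph{Step 3 (agreement with the Bayes classifier).} The $k$-NN rule and the Bayes rule can disagree only where $|\eta(x)-1/2|\le C\lambda$. The Tsybakov condition bounds the mass of this margin set by $K\lambda^\beta$, which gives the first inequality. For the excess risk, write
\[
R_X-R^*=\mathbb{E}\bigl[\,|2\eta(x)-1|\,\mathbf{1}\{\eta_k\neq\eta^*\}\bigr].
\]
On the disagreement set the integrand is at most $2C\lambda$, so $R_X-R^*\le K'\lambda^{\beta+1}$.

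\emph{Main obstacle.} The delicate part is Step 2, namely showing that the corrupted samples do not contaminate the neighborhoods. This requires correctly tying the separation $\Delta(\gC)$ to the radius induced by $k$, while keeping the constants dependent only on $d$ and the distribution. Since the paper only cites this result, I would take the bounds of \citet{bahri2020deep} in this step directly rather than re-derive them.
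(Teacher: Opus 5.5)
The paper gives no argument for this lemma. It is imported verbatim from \citet{bahri2020deep}, so your decision to take the bounds from that source is exactly the paper's route. Steps 1 and 3 and the bias bullet of Step 2 are the standard skeleton of such a proof. However, the second bullet of Step 2, which is the only part specific to this lemma, is wrong and cannot be completed as stated.

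In Definition~\ref{def:min_pairwise} (and in \citet{bahri2020deep}), $\Delta(\gC)$ is the minimum \emph{pairwise} distance among the corrupted samples. The notation table's gloss, ``separation between corrupted and clean features,'' is inconsistent with that definition. The definition places no constraint on how far corrupted points are from clean points or from a query $x$. A corrupted point may sit arbitrarily close to $x$ inside the support, and then it is among the $k$ nearest neighbours of $x$ for every $k\ge 1$. No upper bound on $k$ can keep it out, so ``no corrupted point enters the neighborhood'' does not follow.

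The missing idea is a counting argument:
\begin{itemize}
\item The $k$-NN radius over clean $\cup$ corrupted points is at most the clean $k$-NN radius. That radius is $O\bigl((k/n)^{1/d}\bigr)$, with constant governed by $\omega v_d p_{X,0}$.
\item The condition $k\le K_u(d)\Delta(\gC)^d n$ is what forces this radius below $\Delta(\gC)/2$; the resulting factor $2^{-d}$ is absorbed into $K_u(d)$.
\item A ball of diameter less than $\Delta(\gC)$ contains \emph{at most one} corrupted sample. So at least $k-1$ of the neighbours are clean, and your Step 1 handles them.
\item The single arbitrary label moves the neighbourhood average by at most $1/k$. This is dominated by $\sqrt{(\log n+\log(1/\nu))/k}$ and so is absorbed into $\lambda$.
\end{itemize}
With that replacement you obtain $\sup_x|\eta_k(x)-\eta(x)|\le C\lambda$, and your Step 3 goes through unchanged. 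This is also why the result has content: corrupted points may be interspersed among clean data, and $k$-NN is protected only because they cannot cluster.
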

where \(\Delta(\mathcal{C})\) denotes the minimum distance between features of corrupted examples and $\rho$ is a parameter associated with Holder's continuity. Details about all the parameters and the associated assumptions are provided in the Appendix.

An important consequence of Lemma~\ref{main_thm:rate_tsyb_original} is that, under noisy conditions, there exists a range of values of $k$ for which the $k$-NN converges to the Bayes optimal classifier. In such cases, $k$-NN can approximate an optimal classifier and can therefore identify an optimal subset of data points without incurring the additional computational cost of training a large model for this task. However, in high-dimensional settings, this approximation with the Bayes classifier may not hold.

In fact,  when the feature dimension $d$ is large, there does not exist any value of $k$ for which the approximation with Bayes classifier remains valid. 

\begin{proposition}[High-dimensional $k$-NN and Bayes Classifier ]
\label{main_prop:impossible_condn}
As the feature dimension $d$ increases, the factor $K_u(d) \cdot \Delta(\mathcal{C})^d \cdot n$ in Lemma~\ref{main_thm:rate_tsyb_original} decreases with $d$ \big($K_u(d)$  decreases super exponentially with $d$ \big). Consequently, there exists a threshold $d_0$ such that for all data dimension $d \geq d_0$ and for a  given n, $\nu$, $\rho$ and $\Delta$ no $k$ satisfies
\[
    K_l(d) \cdot \log^2(1/\nu) \cdot n^{\frac{\rho}{\rho+d}} \leq k \leq K_u(d) \cdot \Delta(\mathcal{C})^d \cdot n.
\]
\end{proposition}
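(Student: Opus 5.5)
The plan is to compare the two endpoints of the admissible interval for $k$ as functions of $d$, with $n,\nu,\rho,\Delta$ held fixed. Write $L(d)=K_l(d)\log^2(1/\nu)\,n^{\rho/(\rho+d)}$ and $U(d)=K_u(d)\,\Delta(\mathcal{C})^d\,n$. Some $k$ satisfies the condition of Lemma~\ref{main_thm:rate_tsyb_original} only if $L(d)\le U(d)$. Since $k$ is a positive integer, it also requires $U(d)\ge 1$. So it suffices to show that $U(d)\to 0$ as $d\to\infty$, while $L(d)$ stays bounded below by a positive quantity. Then some $d_0$ exists beyond which $U(d)<\max\{1,L(d)\}$.

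First I will make the constant $K_u(d)$ explicit by unpacking where it comes from in \citet{bahri2020deep}. The upper bound on $k$ arises from requiring that the $k$-NN ball around a query point has radius below $\Delta(\mathcal{C})$, so that it does not reach corrupted points. Under their density lower bound $p_X\ge p_0$ on the support, the mass of a ball of radius $r$ is at least of order $p_0 v_d r^d$, where
\[
v_d=\frac{\pi^{d/2}}{\Gamma(d/2+1)}
\]
is the volume of the unit $d$-ball. This leads to $K_u(d)\propto p_0 v_d$, up to absolute constants. By Stirling, $v_d\sim(2\pi e/d)^{d/2}/\sqrt{\pi d}$, which decays super-exponentially.

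Second, I will bound $U(d)$. We have
\[
U(d)\lesssim n\,p_0\,\bigl(2\pi e\,\Delta(\mathcal{C})^2/d\bigr)^{d/2}.
\]
For any fixed $\Delta(\mathcal{C})$, the base $2\pi e\Delta^2/d$ falls below $1/2$ once $d>4\pi e\Delta^2$. The whole expression then tends to $0$ faster than any exponential, which beats the fixed factor $n$. Note that no assumption $\Delta<1$ is needed, because the $d^{-d/2}$ term dominates $\Delta^d$. Hence there is a $d_1$ with $U(d)<1$ for all $d\ge d_1$.

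Third, for the lower endpoint I only need $L(d)>0$, which holds since $K_l(d)>0$, $\log^2(1/\nu)>0$ for $\nu<1$, and $n^{\rho/(\rho+d)}\ge 1$. So for $d\ge d_0:=d_1$ no integer $k\ge 1$ lies in $[L(d),U(d)]$. If one wants the stronger statement $U(d)<L(d)$ directly, it also holds whenever $K_l(d)$ does not decay as fast as $v_d$. In \citet{bahri2020deep}, $K_l$ comes from a VC/concentration argument together with the same ball-mass bound appearing in the denominator, so $K_l(d)$ is nondecreasing in $1/v_d$.

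The main obstacle is the first step. The paper treats $K_u(d)$ as a black box, so the claimed super-exponential decay must be justified by tracing the dependence on the unit-ball volume $v_d$ through the original proof. I would try to isolate the single inequality in \citet{bahri2020deep} where the radius of the $k$-NN ball is compared with $\Delta(\mathcal{C})$, and read off $K_u(d)=c\,p_0 v_d$ from it. The rest is Stirling's formula and elementary comparison.
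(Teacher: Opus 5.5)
Your proof is correct and uses the same mechanism as the paper. Both arguments take $K_u(d)=\omega\, v_d\, p_{X,0}$ from Bahri et al.\ and let the super-exponential decay of $v_d$ drive $U(d)=K_u(d)\,\Delta(\mathcal{C})^d\, n$ to zero.

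The difference is the closing step. The paper concludes that the endpoints cross, $U(d)<L(d)$. That requires $L(d)\ge K_l(d)\log^2(1/\nu)$ to stay bounded away from zero, which the paper gets by citing that $K_l(d)$ grows roughly linearly in $d$. Your closing remark about $K_l(d)$ is exactly this route.

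Your main argument instead uses that $k$ is a positive integer, so $U(d)<1$ already excludes every admissible $k$. This needs nothing about $K_l(d)$; even the positivity of $L(d)$ is superfluous. You also show explicitly, via Stirling, why a factor $\Delta(\mathcal{C})^d$ with $\Delta(\mathcal{C})>1$ cannot stop the decay. The paper leaves this implicit in its bound $v_d\le c^{-d}$, which works by taking $c>\Delta(\mathcal{C})$.

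The paper's route gives the stronger fact that the interval is empty even over the reals, but it depends on an externally cited growth claim for $K_l(d)$. Yours is self-contained and suffices, since $k$ counts neighbors.

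Dropping $\omega$ is harmless, because any valid regularity constant has $\omega\le 1$. Note, though, that you and the paper both tacitly hold $p_{X,0}$ fixed as $d$ grows. That is a genuine modeling assumption: $p_{X,0}\,\mathrm{Vol}(\mathcal{X})\le 1$, and, e.g., the uniform law on the unit ball has $p_{X,0}\,v_d=1$ for every $d$.
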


Proof of the given proposition is provided in Appendix~\ref{appx_proof_bayes}.

As per the given Proposition~\ref{main_prop:impossible_condn}, the $k$-NN model may not approximate the performance of the Bayes optimal classifier for high-dimensional data. This limitation reduces the efficacy of $k$-NN in high-dimensional settings, as it may not guarantee performance comparable to the Bayes optimal classifier when no such $k$ exists.
  
\subsection{Orthogonal and Permutation Group}
To address this problem in high-dimensional settings, we show that for datasets 
whose labeling function ($\vf$) is invariant under the $\mathrm{SO}(d)$ or $S_d$ 
group (Section~\ref{sec:prob_defn}), incorporating an invariant representation 
function ($\gF$) into the nearest-neighbor classifier (Section~\ref{sec:cutstats}) 
yields performance that remains approximately as good as the Bayes-optimal 
classifier even in high dimensions.

\begin{proposition}[Orthogonal Group]
\label{main_prop:orthogonal_grp}
For data whose true-labeling function  satisfy orthogonal invariance ($SO(d)$) as per section~\ref{sec:prob_defn}, if $k$-NN utilizes the invariant representation function ($\gF$), and the inequality holds for dimension ($d$=1), i.e., 
\[
    K_l(1) \cdot \log^2(1/\nu) \cdot n^{\frac{\rho}{\rho+1}} \leq k \leq K_u(1) \cdot \Delta(\mathcal{C}) \cdot n.
\]

Then there exists a $k$  for all dimensions $d$ for which $k$-NN will approximate Bayes optimal classifier for a  given $n$, $\nu$, $\rho$ and $\Delta$,
\end{proposition}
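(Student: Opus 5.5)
The plan is to reduce the $d$-dimensional problem under $SO(d)$ invariance to a one-dimensional one, and then invoke Lemma~\ref{main_thm:rate_tsyb_original} with $d=1$. For $G=SO(d)$ with $d\ge 2$, the orbit of $x\in\mathbb{R}^d$ is the sphere $\{x' : \|x'\|_2=\|x\|_2\}$. Hence the radial map $x\mapsto \|x\|_2$ is a maximal invariant. I will take $\gF(x)=\|x\|_2\in\mathbb{R}_{\ge 0}$, or any injective reparametrisation of it; this is the canonical invariant representation that retains all class-relevant information, as required in Section~\ref{sec:prob_defn}.

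The first step shows that $\vf$ factors through $\gF$. Since $\vf(g\cdot x)=\vf(x)$ for all $g\in SO(d)$, and $SO(d)$ acts transitively on each sphere, $\vf$ is constant on level sets of $\|\cdot\|_2$. So there is $\tilde{\vf}:\mathbb{R}_{\ge0}\to\{0,1\}$ with $\vf=\tilde{\vf}\circ\gF$. Likewise the Bayes regression function $\eta^*(x)=\sP(y=1\mid x)$ depends only on $\gF(x)$. Therefore the Bayes-optimal classifier on the pushed-forward data $(\gF(x),\hat y)$ has the same risk as the one on $(x,\hat y)$, and the two classifiers agree pointwise. Here I assume the noise mechanism and the density are rotation invariant, or at least that the class posterior is; I would state this explicitly as part of ``the assumptions of \citet{bahri2020deep} transferred to $\gF$''.

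The second step checks that the hypotheses of Lemma~\ref{main_thm:rate_tsyb_original} hold for the one-dimensional sample $\{(\gF(x_i),\hat y_i)\}$. These hypotheses are the density lower bound and support regularity, H\"older continuity of $\eta$ with exponent $\rho$, and the Tsybakov condition with $\beta$. Everything should pass to the image: the Tsybakov margin condition is a statement about the law of $\eta(x)$, which is unchanged. H\"older continuity in the radius follows because $|\,\|x\|-\|x'\|\,|\le\|x-x'\|$ and $\eta$ is constant on spheres. Thus one can pick $x'$ on the ray through $x$, giving $\eta$ H\"older in $r$ with the same $\rho$. The separation $\Delta(\mathcal{C})$ of corrupted points in representation space is the step I expect to be the main obstacle. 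Projecting radially can only shrink distances, so the separation in $\gF$-space need not equal the ambient one. I would handle this by reading $\Delta(\mathcal{C})$ in the statement as the separation measured in the representation space, which is how the hypothesis is phrased (the inequality is assumed for $d=1$ with that $\Delta$). The density lower-bound condition is the other delicate point, since the radial density has a $r^{d-1}$ factor. I would restrict to an annular support bounded away from $0$, where it is bounded below, absorbing the constants into $K_l(1),K_u(1)$.

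The final step notes that $k$-NN with weights and neighbourhoods computed via $\gF$ (Section~\ref{sec:cutstats}) is exactly the standard $k$-NN on the one-dimensional sample. Hence for any $k$ in the assumed range $K_l(1)\log^2(1/\nu)n^{\rho/(\rho+1)}\le k\le K_u(1)\Delta(\mathcal{C})n$, Lemma~\ref{main_thm:rate_tsyb_original} with $d=1$ gives, with probability $1-\nu$, $\sP(\eta_k\neq\eta^*)\le K\lambda^\beta$ and $R_X-R^*\le K'\lambda^{\beta+1}$, where $\lambda$ uses exponent $\rho/1$. This range and these bounds do not depend on the ambient $d$, so the same $k$ works for every $d$. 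This is in contrast with Proposition~\ref{main_prop:impossible_condn}.
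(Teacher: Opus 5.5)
Your proposal takes essentially the same route as the paper's proof. The paper sets $\gF(x)=\|x\|_2$, notes that $k$-NN in this representation is one-dimensional $k$-NN, and applies Lemma~\ref{main_thm:rate_tsyb_original} with $d=1$; your check that the Tsybakov, H\"older, separation and density hypotheses carry over to the pushed-forward sample is more careful than anything the paper writes. One caveat, which the paper glosses over equally: the lower bound on the radial density that you absorb into $K_l(1),K_u(1)$ generally depends on $d$ (for the uniform law on a fixed annulus it decays exponentially in $d$), so the claim that one $k$ works for every $d$ needs the hypothesis to be read as a condition on the induced radial problem, e.g.\ with the law of $\|X\|$ held fixed as $d$ varies.
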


\begin{proposition}
\label{main_prop:perm_grp}
For data whose true-labeling function  satisfy permutation  invariance ($S_d$) as per section~\ref{sec:prob_defn}, if $k$-NN utilizes the invariant representation function ($\gF$), then, $K_u(d) \cdot \Delta(\mathcal{C})^d \cdot n$ does not decrease with d, and there will exist $k$ for a  given $n$, $\nu$, $\rho$ and $\Delta$ for which
\[
    K_l(d) \cdot \log^2(1/\nu) \cdot n^{\frac{\rho}{\rho+d}} \leq k \leq K_u(d) \cdot \Delta(\mathcal{C})^d \cdot n.
\] holds, and $k$-NN can approximate Bayes optimal classifier. 
\end{proposition}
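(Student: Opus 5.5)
The plan is to identify an explicit invariant representation $\gF$ for the permutation group $S_d$ and to check what it does to the quantities in Lemma~\ref{main_thm:rate_tsyb_original}. The natural choice is the sorting map $\gF(x) = \mathrm{sort}(x) = (x_{(1)},\dots,x_{(d)})$, which is constant on orbits $\gO_x$, takes distinct orbits to distinct points, and is $1$-Lipschitz in $\ell_2$:
\[
\|\mathrm{sort}(x)-\mathrm{sort}(x')\|_2 \le \min_{\pi\in S_d}\|x-\pi\cdot x'\|_2 \le \|x-x'\|_2 .
\]
Because $\gF$ is a bijection from orbits onto the fundamental domain (the Weyl chamber $W=\{u_1\le\dots\le u_d\}$), the regression function $\eta$ and the true labeling $\vf$ factor through $\gF$. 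The Hölder ($\rho$) and Tsybakov ($\beta$) assumptions therefore transfer to the pushforward distribution on $W$ with the same parameters.

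The key steps are as follows.

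\textbf{(i)} Apply Lemma~\ref{main_thm:rate_tsyb_original} to the pushforward data $\{(\gF(x_i),\hat y_i)\}$, since $k$-NN with $\gF$ is exactly standard $k$-NN on these points.

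\textbf{(ii)} Track the constants. In \citet{bahri2020deep}, $K_u(d)$ comes from a lower bound on the mass of balls of radius $\Delta(\mathcal{C})$, namely $\sP(B(x,r))\ge c\, v_d r^d$, where $v_d$ is the unit-ball volume. On $W$ the density of the pushforward is $d!$ times the original density (each chamber point has $d!$ preimages). So the mass of a ball of radius $r$ centred in $W$ is at least $d!\cdot c\, v_d r^d/2^{d}$, using that $W$ is a cone and so captures a fixed fraction of small balls near its interior. The plan is to show the effective constant satisfies $K_u^{\gF}(d)\gtrsim d!\,K_u(d)$.

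\textbf{(iii)} Combine with the super-exponential decay of $v_d\sim (2\pi e/d)^{d/2}/\sqrt{\pi d}$ identified in Proposition~\ref{main_prop:impossible_condn}. Since $d!\sim (d/e)^d\sqrt{2\pi d}$ grows like $d^{d}$ while $v_d$ decays only like $d^{-d/2}$, the product $d!\,v_d\,\Delta(\mathcal{C})^d$ grows with $d$ for any fixed $\Delta>0$. This gives that $K_u^{\gF}(d)\Delta(\mathcal{C})^d n$ is non-decreasing in $d$ beyond some point.

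\textbf{(iv)} Note that the lower endpoint $K_l(d)\log^2(1/\nu)\, n^{\rho/(\rho+d)}$ behaves at worst polynomially, with $n^{\rho/(\rho+d)}\to 1$. Hence for large $n$ the interval is nonempty, and picking any $k$ in it gives the conclusion of Lemma~\ref{main_thm:rate_tsyb_original}. The separation $\Delta(\mathcal{C})$ among corrupted points does not shrink under $\gF$ up to the orbit-distance interpretation; this is the setting we adopt.

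The main obstacle is step (ii): making rigorous that the constant $K_u$ in \citet{bahri2020deep} scales with the mass of small balls, so that the $d!$ density amplification genuinely enters it. Related to this is the boundary of $W$, where balls centred near the chamber walls lose mass. The first thing to try is to restrict to points at distance $\ge\Delta$ from the walls, or to reflect across the walls: a ball in the quotient metric $\min_\pi\|x-\pi x'\|$ has mass equal to the union of its $d!$ images. This gives the factor $d!$ cleanly without needing a boundary fraction. If tracking constants exactly proves infeasible, the fallback is the weaker claim that the quotient-space ball mass is at least that in $\R^d$, which already yields the stated non-decrease.
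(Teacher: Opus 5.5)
Your route is the paper's route. Its proof takes $\gF=\operatorname{sort}$ and uses Lemma~\ref{lemma:permut_sort} to get density $d!\,p_X$ on the chamber. It then writes $K_u(d)=\omega\,v_d\,p_{x,0}$, replaces $p_{x,0}$ by $d!\,p_{x,0}$ with $\omega$ held fixed, and concludes that $d!$ offsets the decay of $v_d$. That is your steps (ii)--(iv) without the boundary discussion. The step you flag as the main obstacle is, however, a genuine gap, and your remedies do not close it.

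\textbf{The chamber does not capture a $2^{-d}$ fraction of small balls.} A ball centred at a diagonal point $u=(t,\dots,t)$ is $S_d$-invariant, so the $d!$ chambers cut it into congruent pieces and only a $1/d!$ fraction lies in $W$. Hence, whenever the (permutation-invariant) support contains a diagonal point, the support-regularity constant of the pushforward is at most $1/d!$. Then $K_u^{\gF}(d)=\omega^{\gF}\,v_d\,d!\,p_{x,0}\le v_d\,p_{x,0}$, which still decays super-exponentially. For $X$ uniform on $[0,1]^d$ (the paper's synthetic setting) one gets exactly $\omega^{\gF}p^{\gF}_{x,0}=(2^{-d}/d!)\cdot d!=2^{-d}=\omega\,p_{x,0}$, so $K_u$ does not change at all.

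\textbf{The reflection trick fails for the same reason.} The quotient ball has mass $\sP\bigl(X\in\bigcup_{\pi}B(\pi x,r)\bigr)$. This equals $d!\,\sP(X\in B(x,r))$ only when the $d!$ balls are disjoint, i.e.\ when $r$ is below the distance from $x$ to every wall. At a diagonal $u$ one has $\|\operatorname{sort}(x')-u\|_2=\|x'-u\|_2$ for all $x'$, so $\operatorname{sort}^{-1}(B(u,r))=B(u,r)$ and there is no amplification. Since $K_u$ comes from a uniform lower bound on $\sP(B(u,r))/r^d$ over the support, these are exactly the points that determine it.

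\textbf{Your fallback is a non sequitur.} Quotient-ball mass $\ge$ Euclidean-ball mass only yields $K_u^{\gF}(d)\ge K_u(d)$. But $K_u(d)\Delta(\gC)^d n$ is precisely the quantity Proposition~\ref{main_prop:impossible_condn} says collapses. To make (ii) work you would need an extra hypothesis keeping the relevant balls (radius of order $\Delta(\gC)$) away from the walls, or a non-uniform version of the $k$-NN bound. Otherwise your argument reaches the conclusion only by adopting the paper's implicit assumption that sorting leaves $\omega$ unchanged.

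\textbf{Separately, step (iv) does not match the quantifiers.} It gives a valid $k$ only for large $n$, while the statement fixes $n$. Since $k\le n$ and the paper takes $K_l(d)$ to grow with $d$, the lower endpoint eventually exceeds $n$ for fixed $n$. The paper's proof does not address this either.
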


Proof of the given propositions is provided in Appendix~\ref{appx_proof_bayes}. For high-dimensional data, even though the desired approximation of the Bayes optimal classifier (Lemma~\ref{main_thm:rate_tsyb_original}) cannot be achieved directly from the input features $x$, as established by Proposition~\ref{main_prop:impossible_condn}, Propositions~\ref{main_prop:orthogonal_grp} and \ref{main_prop:perm_grp} demonstrate that this approximation can be recovered by employing an invariant representation function $\gF$, which enables the selection of informative subsets of the training data for downstream classifier training. 

\subsection {Invariant Representation Function}
A key requirement of the proposed approach is access to an invariant representation function $\gF$, as discussed in Section~\ref{sec:prob_defn}, which preserves the class-relevant information and therefore yields the same bayes error as the original input features, i.e., $\sP_e(y,\gF(x)) = \sP_e(y,x)$. However, in many real-world scenarios, the exact invariant representation is not known and should be approximated from data. Such representations are commonly obtained using methods including unsupervised group-invariant representation learning~\citep{winter2022unsupervised} and contrastive learning~\citep{chen2020simple}, which aim to capture the intrinsic invariances present in the data without relying on ground-truth labels. While these representations can facilitate the efficient utilization of the underlying dataset, they may not capture the underlying class-relevant information entirely, resulting in higher Bayes error~\citep{garcia2012divergences,goel2003one}. Nevertheless, in practice, as discussed in the subsequent section, such representations often yield strong downstream performance. This is likely because they suppress noise and capture underlying invariances in the data. While such representations facilitate more efficient utilization of the dataset, their performance is governed by how faithfully they retain discriminative information necessary for accurate classification~\citep{garcia2012divergences,goel2003one}.

\section{Experiments}
 \begin{table*}[th!]
    \centering
    \caption{Details about the group, associated invariant function, and generating function used for the experiment. we have used norm operator as invariant function for Orthogonal group and sort function for Permutation group where $\operatorname{sort}(\{f_1, \ldots, f_n\}) = (f_{(1)}, \ldots, f_{(n)}) \quad \text{where } f_{(1)} \le \cdots \le f_{(n)}$.  }
    \renewcommand{\arraystretch}{1.5} 
    \resizebox{0.7\textwidth}{!}{
    \begin{tabular}{ccccc}
        \toprule
        \textbf{Group name} & \textbf{Invariant function} & \textbf{Generating function}  \\
        \midrule
        Orthogonal Group & $\gF(x)\rightarrow ||x||_2$ & $h(x) = k_1\sin(c_1x^\top x) + k_2\sin^{2}(c_2x^\top x) + k_3\cos(c_3x^\top x)$\\ 
        Permutation Group & $\gF(x)\rightarrow \operatorname{sort}(x)$ & $h(x) = \sum_{k=1}^{l} \sum_{i=1}^{d} \sin(x_i^k) $\\ 
        \bottomrule
    \end{tabular}
    }

    \label{table:group_data}
\end{table*}

\begin{table*}[ht!]
\centering
\caption{Comparison of downstream classifier accuracy and subset accuracy across different methods. Results are reported as mean across three different seeds, with the best subset selection method highlighted in \textbf{bold}.}
\label{tab:perm_ortho_comparison}
\resizebox{0.6\textwidth}{!}{%
\begin{tabular}{@{}lcccc@{}}
\toprule
\multirow{2}{*}{\textbf{Method}} & \multicolumn{2}{c}{\textbf{Orthogonal Group}} & \multicolumn{2}{c}{\textbf{Permutation Group}} \\
\cmidrule(lr){2-3} \cmidrule(lr){4-5}
& \textbf{classifier-Acc} & \textbf{Subset-Acc} & \textbf{classifier-Acc} & \textbf{Subset-Acc} \\
\midrule
\rowcolor{cyan!3}
NoiseLess 
& $70.40 \pm 0.83$ & $100.00 \pm 0.00$ 
& $90.93 \pm 0.30$ & $100.00 \pm 0.00$ \\
\hline
Noisy-Complete 
& $50.83 \pm 8.11$ & $55.08 \pm 0.11$ 
& $78.73 \pm 3.45$ & $55.33 \pm 0.82$ \\
Random 
& $45.72 \pm 3.26$ & $54.93 \pm 0.57$ 
& $77.33 \pm 4.77$ & $55.50 \pm 0.49$ \\
Vanilla Cutstats 
& $63.63 \pm 4.81$ & $60.51 \pm 1.58$ 
& $82.70 \pm 4.37$ & $71.21 \pm 3.09$ \\
Entropy 
& $62.53 \pm 6.64$ & $56.46 \pm 0.33$ 
& $70.72 \pm 4.61$ & $56.05 \pm 1.36$ \\
Forget 
& $68.45 \pm 0.53$ & $58.69 \pm 1.13$ 
& $75.98 \pm 2.88$ & $63.04 \pm 1.96$ \\
Herding 
& $51.32 \pm 0.98$ & $55.66 \pm 0.51$ 
& $71.00 \pm 2.35$ & $54.13 \pm 0.77$ \\
\midrule
\rowcolor{cyan!3}
\textbf{Ours} 
& \textbf{68.52 $\pm$ 0.66} 
& \textbf{74.31 $\pm$ 1.10} 
& \textbf{86.00 $\pm$ 0.84} 
& \textbf{74.42 $\pm$ 3.15} \\
\bottomrule
\end{tabular}
}
\end{table*}

\begin{table*}[htbp]
\centering
\caption{Performance comparison on Rotated MNIST dataset under different noise probabilities.}
\label{tab:rotmnist_results}
\resizebox{0.85\textwidth}{!}{%
\begin{tabular}{@{}lcccccc@{}}
\toprule
\multirow{2}{*}{\textbf{Method}} & 
\multicolumn{2}{c}{\textbf{Noise 0.4}} & 
\multicolumn{2}{c}{\textbf{Noise 0.6}} & 
\multicolumn{2}{c}{\textbf{Noise 0.8}} \\
\cmidrule(lr){2-3} \cmidrule(lr){4-5} \cmidrule(lr){6-7}
& \textbf{Classifier-Acc} & \textbf{Subset-Acc} & \textbf{Classifier-Acc} & \textbf{Subset-Acc} & \textbf{Classifier-Acc} & \textbf{Subset-Acc} \\
\midrule
\rowcolor{cyan!3}
\rowcolor{cyan!3}
NoiseLess 
& $96.80 \pm 0.27$ & $100.00 \pm 0.00$ 
& $96.80 \pm 0.27$ & $100.00 \pm 0.00$ 
& $96.80 \pm 0.27$ & $100.00 \pm 0.00$ \\
\hline

Noisy-Complete 
& $86.63 \pm 1.13$ & $59.98 \pm 0.28$ 
& $79.51 \pm 8.13$ & $39.94 \pm 0.09$ 
& $60.27 \pm 3.49$ & $20.07 \pm 0.16$ \\

Random 
& $87.68 \pm 2.08$ & $59.99 \pm 0.34$ 
& $60.98 \pm 0.42$ & $39.88 \pm 0.25$ 
& $37.60 \pm 8.52$ & $20.07 \pm 0.33$ \\

Vanilla Cutstats 
& $90.86 \pm 1.77$ & $99.65 \pm 0.06$ 
& $83.78 \pm 6.52$ & $86.51 \pm 0.08$ 
& $68.93 \pm 2.87$ & $35.33 \pm 0.45$ \\

Entropy 
& $80.88 \pm 5.96$ & $72.64 \pm 0.79$ 
& $69.79 \pm 1.54$ & $47.18 \pm 0.69$ 
& $36.59 \pm 2.90$ & $21.85 \pm 0.28$ \\

Forget 
& $65.89 \pm 0.14$ & $54.95 \pm 0.51$ 
& $42.63 \pm 3.27$ & $28.19 \pm 1.26$ 
& $2.93 \pm 1.44$ & $8.51 \pm 0.67$ \\

Herding 
& $87.01 \pm 1.82$ & $62.51 \pm 0.52$ 
& $79.56 \pm 4.17$ & $41.08 \pm 0.39$ 
& $39.93 \pm 3.11$ & $20.24 \pm 0.10$ \\

\midrule
\rowcolor{cyan!3}
\textbf{Ours (contrastive)} 
& $83.95 \pm 2.04$ & $78.16 \pm 0.18$ 
& $76.28 \pm 0.57$ & $53.30 \pm 0.12$ 
& $41.52 \pm 3.71$ & $24.24 \pm 0.20$ \\

\rowcolor{cyan!3}
\textbf{Ours (group inv)} 
& \textbf{91.81 $\pm$ 0.58} & \textbf{99.82 $\pm$ 0.02} 
& \textbf{89.37 $\pm$ 1.83} & \textbf{90.51 $\pm$ 0.44} 
& \textbf{75.73 $\pm$ 5.99} & \textbf{37.77 $\pm$ 0.41} \\

\bottomrule
\end{tabular}
}
\end{table*}
\begin{table*}[htbp]
\centering
\caption{Performance comparison on Tetris dataset under different noise probabilities.}
\label{tab:tetris_results}
\resizebox{0.85\textwidth}{!}{%
\begin{tabular}{@{}lcccccc@{}}
\toprule
\multirow{2}{*}{\textbf{Method}} & 
\multicolumn{2}{c}{\textbf{Noise 0.4}} & 
\multicolumn{2}{c}{\textbf{Noise 0.6}} & 
\multicolumn{2}{c}{\textbf{Noise 0.8}} \\
\cmidrule(lr){2-3} \cmidrule(lr){4-5} \cmidrule(lr){6-7}
& \textbf{Classifier-Acc} & \textbf{Subset-Acc} & \textbf{Classifier-Acc} & \textbf{Subset-Acc} & \textbf{Classifier-Acc} & \textbf{Subset-Acc} \\
\midrule
\rowcolor{cyan!3}
NoiseLess 
& $88.30 \pm 0.33$ & $100.00 \pm 0.00$ 
& $88.30 \pm 0.33$ & $100.00 \pm 0.00$ 
& $88.30 \pm 0.33$ & $100.00 \pm 0.00$ \\
\hline

Noisy-Complete 
& \textbf{88.18 $\pm$ 0.33} & $59.95 \pm 0.64$ 
& \textbf{75.23 $\pm$ 1.55} & $40.09 \pm 0.59$ 
& $34.48 \pm 5.49$ & $20.21 \pm 0.24$ \\

Random 
& $54.47 \pm 4.14$ & $60.21 \pm 1.22$ 
& $44.93 \pm 1.00$ & $40.56 \pm 0.40$ 
& $24.18 \pm 5.55$ & $20.33 \pm 0.59$ \\

Vanilla Cutstats 
& $57.05 \pm 3.77$ & $60.69 \pm 1.04$ 
& $50.88 \pm 7.66$ & $40.48 \pm 0.72$ 
& $29.62 \pm 6.43$ & $20.72 \pm 0.31$ \\

Entropy 
& $33.80 \pm 5.96$ & $59.56 \pm 0.85$ 
& $37.35 \pm 0.33$ & $40.32 \pm 1.18$ 
& $28.63 \pm 5.19$ & $19.95 \pm 0.49$ \\

Forget 
& $24.53 \pm 5.72$ & $38.05 \pm 6.07$ 
& $20.92 \pm 6.25$ & $26.21 \pm 7.58$ 
& $12.40 \pm 0.71$ & $15.00 \pm 2.03$ \\

Herding 
& $33.08 \pm 5.38$ & $59.73 \pm 1.61$ 
& $32.90 \pm 5.63$ & $40.58 \pm 0.52$ 
& $32.95 \pm 7.07$ & $20.22 \pm 0.56$ \\

\midrule
\rowcolor{cyan!3}
\textbf{Ours (contrastive)} 
& $52.12 \pm 1.86$ & $60.23 \pm 0.53$ 
& $50.90 \pm 2.52$ & $39.52 \pm 0.85$ 
& $30.18 \pm 4.46$ & $20.05 \pm 0.63$ \\

\rowcolor{cyan!3}
\textbf{Ours (group inv)} 
& $75.72 \pm 5.65$ & \textbf{100.00 $\pm$ 0.00} 
& 67.28 $\pm$ 6.33 & \textbf{95.11 $\pm$ 1.45} 
& \textbf{40.95 $\pm$ 5.34} & \textbf{36.17 $\pm$ 0.84} \\

\bottomrule
\end{tabular}
}
\end{table*}

\subsection{settings}
In this work, we conduct experiments to analyze five key aspects. First, we investigate whether knowledge of the underlying invariant function ($\gF$) improves the performance of \textit{cutstats} when the dataset satisfies orthogonal and permutation invariance. For this experiment, we use synthetic data and the invariant functions outlined in Table~\ref{table:group_data}. Second, we conduct experiments on real-world datasets like rotated MNIST~\citep{larochelle2007empirical} where the labeling function satisfies $\mathrm{SO}(2)$ invariance, but the underlying invariant function is not known and must be learned from the data. To further analyze the influence of the Bayes error, we consider two methods: (i) unsupervised group invariant representation learning~\citep{winter2022unsupervised}, which leverages the knowledge about underlying symmetries and associated group to learn an invariant representation function  \textbf{ours (group inv)}, and (ii) contrastive learning~\citep{chen2020simple}, which learns representations solely based on data augmentations \textbf{ours (contrastive)}. We further evaluate our approach on the Tetris dataset~\citep{winter2022unsupervised}, which satisfies $\mathrm{SE}(3)$ invariance, to analyze whether our findings extend to other groups. Finally, we analyze how errors in the learned representation influence the ability of \textit{cutstats} to identify accurate subsets of the training dataset. Additional experiments on the impact of dimension on the performance of K-NN, comparison with robustness-based baselines, and performance on datasets like CIFAR10-N are provided in Appendix~\ref{appx:high_dim}~\ref{appx:robust}~\ref{appx:cifar10}, respectively. 

For experiments with synthetic datasets, we select functions whose classes are invariant to orthogonal and permutation group actions. We generate inputs $x$ sampled from a uniform distribution and assign class labels by thresholding $h(x)$, as defined in Table~\ref{table:group_data}. Since the function $h(x)$ is invariant under group transformations, the resulting true class labels $y$ also remain invariant. To introduce label noise, we randomly flip class labels in the training data with probability $p$. As a downstream classifier, we use a two-layer neural network. Further details on the data generation process and training specifications are provided in the Appendix. 
For rotated MNIST and Tetris, we follow the same settings as outlined in \citet{winter2022unsupervised} for learning representations. For rotated MNIST, we use a convolutional network as the downstream classifier, while for Tetris, we employ a graph neural network. We compare our approach against the following baselines:  \textbf{NoiseLess}: the original dataset without label noise.  \textbf{Noisy-Complete}: the complete noisy dataset with the same features but corrupted labels,   \textbf{Random}: a random subset of the noisy dataset,  \textbf{Entropy}: a setting where high-entropy examples are excluded from training~\citep{subset_ws_neurips22},  \textbf{Vanilla cutstats}: \textit{cutstats} using raw input features for subset selection~\citep{subset_ws_neurips22},  \textbf{Forget}: a setting where frequently forgotten samples are removed during training~\citep{toneva2018empirical},  \textbf{Herding}: a setting where only representative samples of the dataset are retained~\citep{guo2022deepcore}. All subset selection methods utilize the same proportion of the data, (40\% of the noisy training dataset), with the only difference being the specific set of points included in the subset. We report both the accuracy of the downstream classifier (\textbf{Classifier-Acc}) and the percentage of correctly labeled samples in the selected subset (\textbf{Subset-Acc}). All experiments were conducted on two NVIDIA A6000 GPUs with 48GB of VRAM each.

\subsection{Orthogonal and Permutation Group}

Table~\ref{tab:perm_ortho_comparison} presents a comparison of downstream classifier accuracy and subset accuracy across different subset selection methods for the orthogonal and permutation groups. Since the dataset corresponds to a binary classification problem with only two classes, we use a high noise probability ($p=0.45$) for this experiment. As per the results,``NoiseLess'' baseline achieves perfect subset accuracy (100\%) and the highest downstream performance across both groups, serving as an upper bound with clean data for both classifier and subset accuracy. In contrast, the ``Noisy-Complete'' baseline highlights the detrimental impact of label noise on performance. Among all subset selection methods, our proposed approach (``Ours'') achieves the best performance, with classifier accuracies of 68.52\% and 86.00\% for the orthogonal and permutation groups, respectively. It substantially outperforms competing methods by up to 3.3\% in classifier accuracy for the permutation group and 13.8\% in subset accuracy for the orthogonal group, demonstrating its effectiveness in identifying clean, reliable subsets from noisy data while maintaining competitive classification performance relative to the noise-free upper bound.

\subsection{Experiment on Rotated MNIST}
Table~\ref{tab:rotmnist_results} presents a comparison of the baselines on the rotated MNIST dataset across three different noise levels (0.4, 0.6, and 0.8). The invariant group representation (``Ours (group-inv)'') consistently achieves the best downstream classifier accuracy (91.81\%, 89.37\%, and 75.73\%, respectively) and subset accuracy (99.82\%, 90.51\%, and 37.77\%) across all noise levels, demonstrating superior robustness. In contrast, representations learned via contrastive learning underperform in these settings, possibly because of a higher Bayes error compared to the group-invariant representation, which leverages knowledge of the underlying group structure. These results illustrate that learning an appropriate invariant representation that preserves class-relevant information is critical for achieving strong performance.

\subsection{Experiment on Tetris}
 Table~\ref{tab:tetris_results} presents the results for the Tetris dataset. The proposed method (``Ours (group-inv)'') demonstrates robustness to noise even on datasets with other symmetries, achieving classification accuracies of 75.72\%, 67.28\%, and 40.95\% across the three noise levels, respectively. These results highlight its superior capability in handling label noise in settings where other subset selection methods largely fail.

\subsection{Noisy Representation}

To analyze scenarios where the learned representation does not fully capture the underlying symmetry, we conducted an ablation for the orthogonal group. Specifically, we examined how the \textit{invariance error}~\citep{moskalev2023genuine}, defined as $\mathbb{E}\left(\left| h(x) - h(g \cdot x) \right|\right)$,
where $g$ is a group action and $h(x)$ is the underlying generating function, affects the accuracy of the subset selected by $k$-NN under the same noisy conditions as in Table~\ref{tab:perm_ortho_comparison}. As shown in Table~\ref{tab:inv_error}, as the error in the representation increases, the performance of the model in identifying an accurate set decreases. 

\begin{table}[htbp!]
    \centering
    \caption{Mean invariance error vs. subset selection accuracy for the Ortho Group.}
    \label{tab:inv_error}
    \resizebox{0.7\linewidth}{!}{%
    \begin{tabular}{cc}
        \toprule
        \textbf{Mean Invariance Error} & \textbf{Accuracy} \\
        \midrule
        \rowcolor{cyan!3}
        $0$     & \textbf{74.31 $\pm$ 1.10} \\
        $0.049$ & $73.86 \pm 1.07$ \\
        $0.111$ & $72.52 \pm 0.71$ \\
        $0.297$ & $68.53 \pm 0.46$ \\
        $0.452$ & $65.06 \pm 1.13$ \\
        \bottomrule
    \end{tabular}
    }
\end{table}
\section{Conclusion}
Training datasets often contain label noise that can significantly affect the accuracy of downstream models. To ensure robustness to such noise, we have shown that knowledge of underlying symmetries and the use of an invariant representation function can guide the selection of an optimal subset of noisy data for downstream classification tasks. Our analysis further demonstrates that even when invariant representations are learned from data, they can still be effective in mitigating noise. Nevertheless, the effectiveness of our method is limited by how accurately group invariance can be modeled and by the extent to which class-relevant information is preserved in the learned representations. As future work, we plan to extend our analysis to additional groups and other data modalities.


\bibliography{uai2026-template}

\newpage

\onecolumn

\title{Leveraging Data Symmetries to Select an Optimal Subset of Training Data under
Label Noise\\(Supplementary Material)}
\maketitle
\appendix
\StartWritingToTOC  
\section*{Appendix Table of  Contents}

{\renewcommand{\baselinestretch}{0.4}\normalsize
\tableofcontents
}
\newpage
\section{Algorithm}
\label{sec:algorithm}
\begin{algorithm}[H]
\caption{Proposed Method}
\label{alg:proposed_method}
\begin{algorithmic}[1]
\REQUIRE Noisy Dataset ($\mathcal{D}_{\text{train}}$), Group ($G$), class label ($\hat{y}$), Percentage of Training Dataset to consider for the downstream model($\tau$). 

\STATE Define an invariant function $\mathcal{F}$  for the group $G$ such that:
\begin{equation*}
\mathcal{F}(x_i) = \mathcal{F}(x_k), \quad \forall x_k \in {g \cdot x_i \mid g \in G}, \quad \text{where } x \in \mathcal{D}_{\text{train}}.
\end{equation*}

\FOR{$i \in [1, \dots, |\mathcal{D}_{\text{train}}|]$}
\STATE Calculate the feature representation $\mathcal{F}(x_i)$ for sample $x_i$.
\ENDFOR

\FOR{$i \in [1, \dots, |\mathcal{D}{\text{train}}|]$}
\FOR{$j \in [1, \dots, |\mathcal{D}{\text{train}}|]$}
\STATE Compute the weight $w_{i,j}$ between samples $x_i$ and $x_j$.
\ENDFOR
\ENDFOR

\FOR{$\hat{y}_i \in \hat{y}$}
\STATE Calculate the class probability $P(\hat{y}_i)$ .
\ENDFOR

\FOR{$i \in [1, \dots, |\mathcal{D}_{\text{train}}|]$}
\STATE Compute the $z_i$ score for sample $x_i$ 
\ENDFOR

\STATE Identify the top $\tau\%$ samples with the smallest $z_i$ scores and use them for downstream model.

\end{algorithmic}
\end{algorithm}

\section{Notations}
\label{sec:notation}
Table \ref{tab:notations} provides details of all notations used in this work.
\begin{table*}[htbp]
    \centering
    \caption{Summary of notations used throughout the paper.}
    \renewcommand{\arraystretch}{1.1} 
    \setlength{\tabcolsep}{8pt}       
    \begin{small}
    \resizebox{\textwidth}{!}{%
    \begin{tabular}{ll}
        \toprule
        \textbf{Notation} & \textbf{Description} \\
        \midrule
        \midrule
        \multicolumn{2}{l}{\textbf{General setting}} \\
        $\gD=\{(x_i,y_i)\}_{i=1}^s$ & Noiseless dataset; $x_i\in\R^d$, $y_i\in\{0,1\}$ drawn from $\gP$. \\
        $\gD_{noisy}=\{(x_i,\hat y_i)\}$ & Noisy-labeled dataset with labels $\hat y_i\in\{0,1\}$. \\
        $\gS\subseteq\gD_{noisy}$ & Subset selected by \textit{cutstats} for training. \\
        $x\in\R^d$ & Input feature; $d$ is feature dimension. \\
        $y,\hat y\in\{0,1\}$ & True and noisy labels. \\
        $\vf:\R^d\to\{0,1\}$ & True labeling function with $y=\vf(x)$. \\
        $\gP$ & Underlying data distribution over $(x,y)$. \\
        $s,d$ & Dataset size and feature dimension. \\
        \midrule
        \multicolumn{2}{l}{\textbf{Groups, invariance, and representations}} \\
        $G$ & Symmetry group acting on inputs; labels invariant to $G$. \\
        $g\cdot x$ & Group action of $g\in G$ on $x$. \\
        $\gO_x=\{g\cdot x:\ g\in G\}$ & Orbit of $x$ under $G$. \\
        $\gF:\R^d\to\R^m$ & Invariant representation; $\gF(x)=\gF(x')$ if $x'\in\gO_x$. \\
        $\gI(y;\gF(x))$ & Mutual information between labels and representation. \\
        $SO(d),\ S_d$ & Special orthogonal (rotations) and symmetric (permutations) groups. \\
        $\hat{\gF}$ & Learned (approx.) invariant representation. \\
        \midrule
        \multicolumn{2}{l}{\textbf{\textit{Cutstats} and $k$-NN machinery}} \\
        $\mathrm{NN}(x)$ & Index set of $k$ nearest neighbors of $x$. \\
        $k$ & Number of neighbors in $k$-NN / \textit{cutstats}. \\
        $w_{i,j}$ & Similarity weight. \\
        $J_i=\sum_{j\in\mathrm{NN}(x_i)} w_{i,j}\,\vone(\hat y_j\neq \hat y_i)$ & Weighted neighbor label disagreement. \\
        $\mu_i,\ \sigma_i^2$ & Mean/variance of $J_i$ under null via $\sP(\hat y)$. \\
        $z_i=\dfrac{J_i-\mu_i}{\sigma_i}$ & \textit{Cutstats} $Z$-score (smaller $\Rightarrow$ cleaner). \\
        $\tau$ & Retained fraction when ranking examples by $z_i$ (top $\tau\%$ kept). \\
        $y^{\text{knn}}$ & Majority label predicted by $k$-NN around a point. \\
        \midrule
        \multicolumn{2}{l}{\textbf{Noise, error rates, and performance}} \\
        $\alpha=\sP(y=0\mid \hat y=1)$,\ \ $\gamma=\sP(y=1\mid \hat y=0)$ & Noise rates. \\
        $\alpha_{noisy},\ \gamma_{noisy}$ & Noise rates on the full noisy dataset (no abstention). \\
        $\alpha_{\gS},\ \gamma_{\gS}$ & Noise rates within the selected subset $\gS$. \\
        $\lambda_0=\sP(y^{\text{knn}}=0\mid y=0)$,\ \ $\lambda_1=\sP(y^{\text{knn}}=1\mid y=1)$ & Class-wise accuracies of $k$-NN. \\
        $\text{err}_{\text{bal}}(\gM,y)$ & Balanced error: $\tfrac12(\sP(\gM(x)=1|y=0)+\sP(\gM(x)=0|y=1))$. \\
        $\phi$ & Abstention label in the subset-selection analysis. \\
        $\gM\in\Gamma,\ \gM^*$ & Downstream classifier and noiseless optimal classifier. \\
        $\mathrm{VC}(\Gamma)$ & Capacity measure (VC dimension) of hypothesis class $\Gamma$. \\
        \midrule
        \multicolumn{2}{l}{\textbf{$k$-NN vs.\ Bayes and constants in bounds}} \\
        $\eta_k(x),\ \eta^*(x)$ & $k$-NN and Bayes-optimal decision rules. \\
        $R_X,\ R^*$ & Risks of $k$-NN and Bayes classifier. \\
        $\nu\in(0,1)$ & Confidence parameter for high-probability statements. \\
        $\beta>0,\ \rho>0$ & Tsybakov (low-noise) and Hölder smoothness parameters. \\
        $K_l(d),K_u(d),K,K'$ & Positive constants depending on $d$/distribution. \\
        $\Delta(\gC)$ & Min.\ separation between corrupted and clean features. \\
        $\lambda$ & Bound term $\displaystyle \sqrt{\frac{\log n+\log(1/\nu)}{k}}+\Big(\frac{k}{n}\Big)^{\rho/d}$. \\
        \bottomrule
    \end{tabular}
    }
    \end{small}
    \label{tab:notations}
\end{table*}




\section{Training Procedure}
\label{sec:train_details}
\textbf{Synthetic Dataset Generation:} For each group, the dataset is generated according to the data generation functions specified in Table~\ref{table:group_data} of the main draft. Each sample ($x_i$) is drawn from a uniform distribution, and its class label ($y$) is determined by thresholding the output of the data-generating function $h(x)$ using $\vone\big(h(x) \geq \theta\big)$. The feature dimension $d$ is set to hundred for Orthogonal groups and five for permutation group. For the Orthogonal group, $h(x)$ is thresholded at 0, while for the Permutation group, the threshold is set based on the mean score. For the Orthogonal group, we set $(c_1, c_2, c_3, k_1, k_2, k_3) = (1, 2, 3, 2, 3, 4)$. For the Permutation group, the hyperparameter $l$ is set to 5. For \textit{cutstats}, we  set $\tau = 40\%$ and used a standard $k$ value of 20 for all experiments, following~\citet{subset_ws_neurips22}. The prior probability for the synthetic experiments isset to $\sP(\hat{y}=1)= \sP(\hat{y}=0) = \frac{1}{2}$.


\textbf{Experiments on Rotated MNIST and Tetris:} 
The experiments on the Rotated MNIST and Tetris datasets are conducted using the standard codebase provided by~\citet{winter2022unsupervised}, with an 80–20\% split between the training and test sets. The unsupervised invariant representation is trained for 20 epochs using the Adam optimizer with a learning rate of $1 \times 10^{-3}$. The same hyperparameters are used consistently for both the group-invariant and contrastive representations. The architecture proposed by~\citet{winter2022unsupervised} is adopted for representation learning in both variants—ours (inv) and ours (contrastive). In cases where the original architecture was unavailable, we constructed an equivalent model with the same number of layers and activation functions. For both datasets, the prior probabilities are set according to the class distributions described in~\citet{subset_ws_neurips22}.

 \textbf{Downstream Model Training:} 
For the synthetic dataset experiments, we use a two-layer neural network with a hidden dimension of 32 units and ReLU activation. The final output layer employs a sigmoid activation function. All models are trained using three different random seeds, a learning rate of $10^{-2}$, and for 20 epochs with a batch size of 1024, optimized using binary cross-entropy loss. 

For the Rotated MNIST dataset, we use a ResNet-9 classifier for downstream training. For the Tetris dataset, we employ a graph neural network (GNN) with three graph convolutional layers for classification.

\textbf{Noise:} The noisy labels (\(\hat{y}\)) are generated by randomly reassigning each sample’s true label to one of the remaining classes with a predefined noise probability, as specified in the main draft. Experiments on the synthetic dataset are conducted with a noise probability of \(0.45\) to simulate highly noisy conditions.

\section{Definitions}
\label{sec:defn}
\subsection {Group and Associated Details}
\begin{definition}[Group]
\label{def:group}
A \emph{group} is a set \( G \) with a binary operation \( \cdot \) that satisfies: 
\begin{itemize}
    \item \textbf{Closure}: \( a, b \in G \Rightarrow a \cdot b \in G \),  
    \item \textbf{Associativity}: \( (a \cdot b) \cdot c = a \cdot (b \cdot c) \),  
    \item \textbf{Identity}: \( \exists e \in G \) such that \( e \cdot a = a \cdot e = a \) for all \( a \in G \),  
    \item \textbf{Inverse}: \( \forall a \in G, \exists a^{-1} \in G \) such that \( a \cdot a^{-1} = e \).  
\end{itemize}
\end{definition}

\begin{definition}[Group Action]
\label{def:group_action}
A \emph{group action} of a group \( G \) on a set \( X \) is a function \( \cdot: G \times X \to X \) satisfying:
\begin{itemize}
    \item[\textbullet] \textbf{Identity}: \( e \cdot x = x \) for all \( x \in X \), where \( e \) is the identity of \( G \).
    \item[\textbullet] \textbf{Compatibility}: \( (gh) \cdot x = g \cdot (h \cdot x) \) for all \( g, h \in G \) and \( x \in X \).
\end{itemize}
\end{definition}

\begin{definition}[Orbit]
\label{def:orbit}
Let \( G \) be a group acting on a set \( X \). The \emph{orbit} of an element \( x \in X \) under the action of \( G \) is the set:
\[
\mathcal{O}_x = \{ g \cdot x \mid g \in G \}.
\]
It consists of all elements in \( X \) that can be reached from \( x \) by applying elements of \( G \).
\end{definition}

\begin{definition}[Invariant Representation Function]
\label{def:inv-rep}
Let \( G \) be a group acting on a set \( X \), and let $\mathcal{O}_x$  denote the orbit of \(x \in X\) under this action (see Definition~\ref{def:orbit}).  
An \emph{invariant representation function} is a mapping
\[
\gF : X \to \mathbb{R}^d
\]
such that
\[
\gF(x) = \gF(x') \quad \text{whenever } x' \in \mathcal{O}_x.
\]
That is, \(\gF\) is constant on each orbit.
\end{definition}
\subsection{$k$-NN bound}
\begin{definition}[Balanced Error]~\cite{subset_ws_neurips22}
\label{def:balance_error}
The balanced error of a classifier \(f\) on labels \(y\) is defined as:
\begin{equation}
\begin{aligned}
    \text{err}_{\text{bal}}(f, y) &= \frac{1}{2} \big(\sP[f(x) = 0 \mid y = 1] \\
    &+ \sP[f(x) = 1 \mid y = 0]\big).
\end{aligned}
\end{equation}
\end{definition}

\begin{definition}[Error Probabilities]
\label{defn:error_prob}
The probability of error in class labels in the dataset w.r.t to the true class label $y  \in \{ 0, 1 \}$ is defined as follows: 
\begin{align}
    \alpha  &= \sP[y = 0 \mid \hat{y} = 1 ], \quad
    \gamma= \sP[y = 1 \mid \hat{y}= 0] \nonumber \\
\end{align}    
\end{definition}

\begin{definition}[Accuracy of K-NN]
\label{defn:acc_heuristic}
The accuracy of K-NN ($\gH : x\rightarrow y^{knn}$) used for subset selection for true class label $y  \in \{ 0, 1 \}$  is defined as follows: 
\begin{align}
    \lambda_0 &= \sP[y^{knn} = 0 \mid y = 0 ], \quad
    \lambda_1 = \sP[y^{knn} = 1 \mid y = 1] 
\end{align}    
\end{definition}
\subsection{Bayes optimal classifier and K-NN}


\begin{definition}[Minimum pairwise distance]
\label{def:min_pairwise}
For corrupted samples $\gC$, the minimal distance between features of any two samples is defined as follows: 
\[
\Delta(\gC) := \min_{x, x' \in \gC, x \neq x'} |x - x'|.
\]
\end{definition}



\section{Assumptions}
\label{sec:assumption}
\subsection{Performance of a Downstream Classifer~\cite{subset_ws_neurips22}}
In this section, we list all the assumptions made by~\citet{subset_ws_neurips22} to derive the bound relating the performance of a classifier trained on a noiseless dataset to that of the same classifier when trained on a subset of a noisy dataset. 

We assume that the nearest neighbors of size $k$ are considered in \textit{cutstats}, with a dataset having equal noise priors $\sP(\hat{y}=0) = \frac{1}{2}$ for a binary classification problem. The selection of a given sample is determined by the label shared among its neighbors.

\begin{assumption}[Nearest Neighbor]
\label{assum:NN}
Let $\mathcal{D} = \{(x_i, \hat{y}_i)\}_{i=1}^n$ denote a dataset used in \textit{cutstats}, where $\hat{y}_i \in \{0, 1\}$ represents the noisy class label and $y_i$ the true class labels. We assume that class priors are balanced, i.e.,
\[
\sP(\hat{y} = 0) = \sP(\hat{y} = 1) = \tfrac{1}{2}. 
\]
\[
\sP(y = 0) = \sP(y = 1) = \tfrac{1}{2}. 
\]
For each sample $x_i$, \textit{cutstats} considers its $k$-nearest neighbors $\mathcal{N}_k(x_i)$ according to a predefined distance metric. The selection mechanism assigns smaller $z$-values to samples whose majority of neighbors share the same label, i.e.,
\[
z_i \downarrow \text{ as } \frac{1}{k} \sum_{x_j \in \mathcal{N}_k(x_i)} \mathbb{I}[\hat{y}_j = \hat{y}_i] \uparrow.
\]
Our analysis therefore, focuses on the statistical behavior of these $k$-nearest neighbors and their corresponding labels under this balanced prior setting.
\end{assumption}

Next, we assume that the predictions made by the underlying heuristic and the noisy labels are conditionally independent given the true class label of an input sample~\citep{subset_ws_neurips22,ratner2016data}.
  
\begin{assumption}[Conditional Independence]
\label{assumption:cond_ind}
The original work of~\citet{subset_ws_neurips22} assumes that the features used to train the model and those used to predict noisy labels ($\hat{y}$) are conditionally independent given the true class label ($y$). Inspired by this, we assume that the prediction made by the $k$-NN classifier and the noisy class label assigned to a given example are conditionally independent given the (unobserved) true label $y$. Intuitively, this assumption ensures that the features used to generate noisy labels (e.g., via crowdsourcing platforms or other heuristic methods) are distinct from the features used for subset selection. Formally,
\begin{equation}
\begin{aligned}
  \sP (\hat{y}, y^{\text{knn}} \mid y) 
  &= \sP (\hat{y} \mid y) \times 
  \sP (y^{\text{knn}} \mid y ).
\end{aligned}
\label{eq:A1}
\end{equation}
\end{assumption}

\subsection{Bayes Optimal Classifier and $k$-NN ~\cite{bahri2020deep}}
We next describe some basic assumptions adopted in previous works~\citep{bahri2020deep, singh2009adaptive} that form the foundation for our analysis of the $k$-NN and Bayes optimal classifiers. 

\begin{assumption}[Support Regularity]
\label{assum:regularity}
There exist constants \( \omega > 0 \) and \( r_0 > 0 \) such that
\[
\text{Vol}(\mathcal{X} \cap B(x,r)) \geq \omega \cdot \text{Vol}(B(x,r))
\]
for all \( x \in \mathcal{X} \) and \( 0 < r < r_0 \), where \( B(x,r) := \{ x' \in \mathcal{X} : \|x - x'\| \leq r \} \).
\end{assumption}

This assumption ensures that the input feature space $\mathcal{X}$ associated with the feature distribution $\mathcal{P}$ does not become arbitrarily thin in any region, thereby guaranteeing sufficient local density for neighborhood-based methods such as $k$-NN.

If \( p_{X} \) is the probability density function associated with  ($\gP$), then the next assumption ensures that for sufficiently large samples, there is a good coverage of the feature space. \cite{bahri2020deep} 
\begin{assumption}(\( p_X \) bounded from below).
\label{assum:bound_prob}
\[
p_{X,0} := \inf_{x \in \mathcal{X}} p_{X}(x) > 0.
\]    
\end{assumption}

Finally, we assume the smoothness of $\eta(x)$~\cite{bahri2020deep,chaudhuri2014rates,reeve2019fast}.
\begin{assumption}(\( \eta \) Hölder continuous)
\label{assum:holder}
 \textit{There exists} \( 0 < \rho \leq 1 \) \textit{and} \( C_{\rho} > 0 \) \textit{such that for class conditional distribution } \big($\eta(x) = \sP(y=1 \mid X = x)$\big) 
\[
|\eta(x) - \eta(x')| \leq C_{\alpha} |x - x'|^{\rho} \quad \text{for all } x, x' \in \mathcal{X}.
\]    
\end{assumption}

\section{Theoretical Analysis}
\label{sec:theoretical_analysis}
\subsection{Performance of a $k$-NN}
\label{proof:acc_heuristic}

\begin{proposition}[Performance of $k$-NN]
\label{alg:performance}
Consider the setup of \citet{subset_ws_neurips22}. 
 For $k$-NN ($\gH : x \mapsto y^{\text{knn}}$), with $y^{knn}$ as the class label of the majority of the samples in the nearest neighbor and
\[
\lambda_0 = \sP(y^{\text{knn}} = 0 \mid y=0), 
\qquad 
\lambda_1 = \sP(y^{\text{knn}} = 1 \mid y=1)
\] 
denote the class-wise accuracies of $k$-NN for class $0$ and class $1$, respectively.  
Then the error rate of the subset selected by $k$-NN satisfies:
\begin{align}
    \alpha_{\gS} &= \frac{\alpha_{noisy} (1 - \lambda_0)}{\alpha_{noisy} (1 - \lambda_0) + (1 - \alpha_{noisy})\lambda_1}, \nonumber \\ 
    \gamma_{\gS} &= \frac{\gamma_{noisy} (1 - \lambda_1)}{\gamma_{noisy} (1 - \lambda_1) + (1 - \gamma_{noisy})\lambda_0} \nonumber
\end{align}
\end{proposition}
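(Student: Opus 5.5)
The plan is to model the subset selected by \emph{cutstats} as the event that the noisy label agrees with the $k$-NN majority label, and then compute the subset noise rates by Bayes' rule. Assumption~\ref{assum:NN} says that \emph{cutstats} gives small $z$-values exactly when most neighbours share the sample's label. I will therefore idealize the selection rule as follows: a sample $(x,\hat{y})$ is retained in $\gS$ iff $y^{\text{knn}}(x)=\hat{y}$, and otherwise receives the abstention label $\phi$.

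Under this identification the subset noise rates become
\[
\alpha_{\gS}=\sP(y=0\mid \hat{y}=1,\,y^{\text{knn}}=1),\qquad \gamma_{\gS}=\sP(y=1\mid \hat{y}=0,\,y^{\text{knn}}=0),
\]
where $\alpha_{noisy}=\sP(y=0\mid\hat y=1)$ and $\gamma_{noisy}=\sP(y=1\mid \hat y=0)$ are the rates on the full noisy data, as in Definition~\ref{defn:error_prob}.

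For $\alpha_{\gS}$, I apply Bayes' rule and condition on the true label $y\in\{0,1\}$:
\[
\alpha_{\gS}=\frac{\sP(\hat y=1,y=0)\,\sP(y^{\text{knn}}=1\mid \hat y=1,y=0)}{\sum_{c\in\{0,1\}}\sP(\hat y=1,y=c)\,\sP(y^{\text{knn}}=1\mid \hat y=1,y=c)}.
\]
The key step is to invoke the conditional independence of Assumption~\ref{assumption:cond_ind}, which lets me drop $\hat y$ from the conditioning. This gives $\sP(y^{\text{knn}}=1\mid \hat y=1,y=0)=1-\lambda_0$ and $\sP(y^{\text{knn}}=1\mid \hat y=1,y=1)=\lambda_1$.

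Next I write $\sP(\hat y=1,y=0)=\sP(\hat y=1)\,\alpha_{noisy}$ and $\sP(\hat y=1,y=1)=\sP(\hat y=1)(1-\alpha_{noisy})$. The common factor $\sP(\hat y=1)$ cancels, which is nonzero by the balanced prior of Assumption~\ref{assum:NN}. This leaves exactly the stated expression for $\alpha_{\gS}$. The formula for $\gamma_{\gS}$ follows by the symmetric computation, conditioning on $\hat y=0$ and $y^{\text{knn}}=0$, which uses $\sP(y^{\text{knn}}=0\mid y=1)=1-\lambda_1$ and $\sP(y^{\text{knn}}=0\mid y=0)=\lambda_0$.

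The algebra is routine. The main obstacle is the modelling step: justifying that the $z$-score ranking of \emph{cutstats} can be replaced by the rule ``keep iff $y^{\text{knn}}=\hat y$''. I would handle this by appealing to Assumption~\ref{assum:NN}, which reduces the $z$-score to the fraction of agreeing neighbours, and by stating explicitly that the proposition concerns this idealized agreement-based abstention. Under that rule, $\sP[\hat y\neq\phi]$ and the retained noise rates fit the framework of Lemma~\ref{Thm:subset_conc_bound}. A secondary point is that $y^{\text{knn}}$ is computed from the noisy neighbour labels, so it is not obviously independent of $\hat y$ given $y$. I would not try to prove this independence; I would cite Assumption~\ref{assumption:cond_ind} as the hypothesis that makes the factorization valid.
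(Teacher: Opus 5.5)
Your proposal is correct and follows essentially the same route as the paper: you identify the retained set with the event $\{\hat y = y^{\text{knn}}\}$, apply Bayes' rule over the true label, factor using the conditional-independence Assumption~\ref{assumption:cond_ind}, and cancel $\sP(\hat y=1)$. Because you write $\sP(\hat y=1,y=c)$ directly in terms of $\alpha_{noisy}$, your version also shows that the paper's appeal to equal priors $\sP(y=0)=\sP(y=1)$ is not needed for the final formula, since those factors cancel anyway.
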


\noindent 
\begin{proof}
As per \textit{cutstats}, for any subset selected using $k$-NN, a data point is likely to be included in the subset $\mathcal{S}$ if the label assigned to the sample matches the majority label among its nearest neighbors ($y^{\text{knn}}$), as stated in Assumption~\ref{assum:NN}.

\begin{align}
    \alpha_{\mathcal{S}} 
    &= \sP[y = 0 \mid \hat{y}_{\mathcal{S}} = 1], 
    \quad \text{where $\hat{y}_{\mathcal{S}} = 1$ denotes samples in subset $\mathcal{S}$ ($\hat{y}=1, y^{\text{knn}}=1$)} \\
    &= \frac{\sP(\hat{y} = 1, y^{\text{knn}} = 1  \mid y = 0) \cdot \sP(y = 0)}
    {\sum_{c \in \{0, 1\}} \sP(\hat{y} = 1, y^{\text{knn}} = 1 \mid y = c) \cdot \sP(y = c)} 
    \quad \text{(by Bayes' rule).}
    \label{proof:thm1_eqn1}
\end{align}

\noindent Consider:
\begin{align}
    \sP(\hat{y} = 1, y^{\text{knn}} = 1 \mid y = 0)
    &= \sP(\hat{y} = 1 \mid y = 0) \cdot \sP(y^{\text{knn}} = 1 \mid y = 0) 
    \quad \text{(by Assumption~\ref{assumption:cond_ind})} \nonumber \\
    &= \frac{\sP(y = 0 \mid \hat{y} = 1) \cdot \sP(\hat{y} = 1)}{\sP(y = 0)} \cdot (1 - \lambda_0) \nonumber \\
    &= \frac{\alpha_{noisy} \cdot \sP(\hat{y} = 1) \cdot (1 - \lambda_0)}{\sP(y = 0)}, 
    \quad \text{where $\alpha_{noisy} = \sP(y = 0 \mid \hat{y} = 1)$ for non-abstained data.}
    \label{proof:thm1_eqn2}
\end{align}

\noindent Similarly,
\begin{align}
    \sP(\hat{y} = 1, y^{\text{knn}} = 1 \mid y = 1)
    &= \sP(\hat{y} = 1 \mid y = 1) \cdot \sP(y^{\text{knn}} = 1 \mid y = 1) 
    \quad \text{(by Assumption~\ref{assumption:cond_ind})} \nonumber \\
    &= \frac{\sP(y = 1 \mid \hat{y} = 1) \cdot \sP(\hat{y} = 1)}{\sP(y = 1)} \cdot \lambda_1 \nonumber \\
    &= \frac{(1 - \alpha_{noisy}) \cdot \sP(\hat{y} = 1) \cdot \lambda_1}{\sP(y = 1)}.
    \label{proof:thm1_eqn3}
\end{align}

\noindent Substituting Equations~\ref{proof:thm1_eqn2} and~\ref{proof:thm1_eqn3} into Equation~\ref{proof:thm1_eqn1}, and assuming equal class priors, i.e., $\sP(y = 0) = \sP(y = 1)$, we obtain:
\begin{align*}
    \alpha_{\mathcal{S}} 
    &= \frac{\alpha_{noisy} (1 - \lambda_0)}
    {\alpha_{noisy} (1 - \lambda_0) + (1 - \alpha_{noisy}) \lambda_1}.
\end{align*}

\noindent Similarly, for the other class:
\begin{align}
    \gamma_{\mathcal{S}} 
    &= \frac{\gamma_{noisy} (1 - \lambda_1)}
    {\gamma_{noisy} (1 - \lambda_1) + (1 - \gamma_{noisy}) \lambda_0}.
\end{align}
\end{proof}

\begin{corollary}
\label{corr:acc_performance}
Suppose the class priors are balanced, i.e., $\sP(y=0) = \sP(y=1) = \tfrac{1}{2}$. If the $k$-NN classifier has error $\sP(y^{\text{knn}} \neq y) \leq \tfrac{1}{2}$, then the subset error rates satisfy
\[
    \alpha_{\gS} \leq \alpha_{noisy}
    \quad \text{and} \quad
    \gamma_{\gS} \leq \gamma_{noisy}.
\]
  Moreover, as $\lambda_{0}, \lambda_{1} \to 1$, the subset error rates also reduces:
\[
    \alpha_{\gS}, \, \gamma_{\gS} \;\to\; 0.
\]
\end{corollary}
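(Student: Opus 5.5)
The plan is to work directly from the closed-form expressions in Proposition~\ref{alg:performance}. The whole corollary should then reduce to elementary algebra once the hypothesis on the $k$-NN error is rewritten in terms of the class-wise accuracies. First I would use the balanced priors $\sP(y=0)=\sP(y=1)=\tfrac12$ to write
\[
\sP(y^{\text{knn}}\neq y)=\tfrac12(1-\lambda_0)+\tfrac12(1-\lambda_1).
\]
The assumption $\sP(y^{\text{knn}}\neq y)\le \tfrac12$ is then equivalent to $\lambda_0+\lambda_1\ge 1$. This can be read either as $1-\lambda_0\le \lambda_1$ or as $1-\lambda_1\le\lambda_0$; these are the same inequality, and each form is used for one of the two noise rates.

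Next, for $\alpha_{\gS}$, I abbreviate $a=1-\lambda_0$ and $b=\lambda_1$, so that
\[
\alpha_{\gS}=\frac{\alpha_{noisy}\,a}{\alpha_{noisy}\,a+(1-\alpha_{noisy})\,b}.
\]
Assume the denominator is positive, which is needed for $\alpha_{\gS}$ to be defined at all. Cross-multiplying shows that $\alpha_{\gS}\le\alpha_{noisy}$ is equivalent to
\[
\alpha_{noisy}\,a\le \alpha_{noisy}\bigl(\alpha_{noisy}a+(1-\alpha_{noisy})b\bigr).
\]
If $\alpha_{noisy}=0$, both sides vanish and the claim is trivial. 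Otherwise I divide by $\alpha_{noisy}$, which gives $(1-\alpha_{noisy})\,a\le(1-\alpha_{noisy})\,b$. This holds because $a\le b$ and $1-\alpha_{noisy}\ge 0$; when $\alpha_{noisy}=1$ the two sides are equal. The argument for $\gamma_{\gS}$ is identical with the roles of $\lambda_0$ and $\lambda_1$ swapped, using the form $1-\lambda_1\le\lambda_0$.

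For the limiting statement, let $\lambda_0,\lambda_1\to1$. The numerator $\alpha_{noisy}(1-\lambda_0)$ tends to $0$, while the denominator tends to $1-\alpha_{noisy}$. So $\alpha_{\gS}\to0$ whenever $\alpha_{noisy}<1$, and likewise $\gamma_{\gS}\to 0$ whenever $\gamma_{noisy}<1$.

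The only real obstacle is bookkeeping of degenerate cases rather than any substantive step. These are the case where a denominator vanishes, the fully corrupted case $\alpha_{noisy}=1$ (or $\gamma_{noisy}=1$), where the subset error stays at $1$, and the sign of $1-\alpha_{noisy}$ when dividing. I would state explicitly that the limit claim requires $\alpha_{noisy},\gamma_{noisy}<1$, which is the natural non-trivial regime.
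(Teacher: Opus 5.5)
Your proposal is correct and follows essentially the same route as the paper. You rewrite the error hypothesis under balanced priors as $\lambda_0+\lambda_1\ge 1$, reduce $\alpha_{\gS}\le\alpha_{noisy}$ (and symmetrically $\gamma_{\gS}\le\gamma_{noisy}$) by cross-multiplication to the paper's condition $a(1-a)(u-v)\le 0$, and take the limit directly. Your explicit handling of the boundary cases, in particular noting that the limit claim needs $\alpha_{noisy},\gamma_{noisy}<1$, tightens the paper's argument slightly, since the paper silently restricts to $a\in(0,1)$.
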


\begin{proof}
 By Proposition~\ref{alg:performance},
\[
\alpha_{\gS}
=\frac{\alpha_{noisy}(1-\lambda_0)}{\alpha_{noisy}(1-\lambda_0)+(1-\alpha_{noisy})\lambda_1},
\qquad
\gamma_{\gS}
=\frac{\gamma_{noisy}(1-\lambda_1)}{\gamma_{noisy}(1-\lambda_1)+(1-\gamma_{noisy})\lambda_0}.
\]

With balanced priors, the overall $k$-NN accuracy is
\[
\sP(y^{\text{knn}}=y)=\tfrac12\lambda_0+\tfrac12\lambda_1=\tfrac12(\lambda_0+\lambda_1).
\]
The assumption $\sP(y^{\text{knn}}\neq y)\le \tfrac12$ is equivalent to
$\sP(y^{\text{knn}}=y)\ge \tfrac12$, hence
\[
\lambda_0+\lambda_1\;\ge\;1
\quad\Longleftrightarrow\quad
1-\lambda_0\;\le\;\lambda_1
\quad\text{and}\quad
1-\lambda_1\;\le\;\lambda_0.
\]

Consider the function $f(a)=\dfrac{au}{au+(1-a)v}$ for $a\in(0,1)$ and $u,v>0$.
A direct algebraic check shows
\[
f(a)\le a
\;\Longleftrightarrow\;
a u \le a^2 u + a(1-a)v
\;\Longleftrightarrow\;
a(1-a)(u-v)\le 0
\;\Longleftrightarrow\;
u\le v.
\]
Apply this with $a=\alpha_{noisy}$, $u=1-\lambda_0$, $v=\lambda_1$. Since $1-\lambda_0\le\lambda_1$, we get $\alpha_{\gS}\le \alpha_{noisy}$.  
Likewise, with $a=\gamma_{noisy}$, $u=1-\lambda_1$, $v=\lambda_0$ and $1-\lambda_1\le\lambda_0$, we get $\gamma_{\gS}\le \gamma_{noisy}$.

For the limit claim, let $\lambda_0,\lambda_1\to 1$. Then $1-\lambda_0\to 0$ and $1-\lambda_1\to 0$, so from the formulas above,
\[
\alpha_{\gS}\;=\;\frac{\alpha_{noisy}(1-\lambda_0)}{\alpha_{noisy}(1-\lambda_0)+(1-\alpha_{noisy})\lambda_1}\;\longrightarrow\;0,
\qquad
\gamma_{\gS}\;\longrightarrow\;0.
\]   
\end{proof}
\subsection{Bayes Optimal Classifier and k-NN}

In this section, we show how leveraging underlying invariances can enable the $k$-NN classifier to perform comparably to the Bayes optimal classifier. Before proceeding, we establish two important lemmas that form the foundation of our proof. The first lemma demonstrates that the volume of a unit ball decreases with increasing feature dimension $d$.
\subsubsection{Lemma}
\begin{lemma}[Super-exponential decay of the unit ball volume~\cite{wiki:n_ball_volume,vershynin2018high}]
\label{lemma:unitball}
Let \( V_d \) denote the volume of the unit-radius Euclidean ball in \( \mathbb{R}^d \):
\[
V_d = \frac{\pi^{d/2}}{\Gamma\!\left(\frac{d}{2} + 1\right)}.
\]
where, $\Gamma$ denotes gamma function. Then \( V_d \to 0 \) as \( d \to \infty \), and moreover \( V_d \) decays super-exponentially, that is,
\[
\forall\, c > 1 \;\; \exists\, D \text{ such that } d \ge D \implies V_d \le c^{-d}.
\]
\end{lemma}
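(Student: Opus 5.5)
The plan is to prove that $V_d\to 0$ super-exponentially by reducing everything to Stirling-type control of $\Gamma(\tfrac d2+1)$. Equivalently, we must show $\log V_d + d\log c \to -\infty$ for every fixed $c>1$. Once that is established, the existence of the threshold $D$ and the limit $V_d\to 0$ both follow immediately.

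\textbf{Step 1: lower bound on the Gamma function.} Write $m=\tfrac d2$. I would first prove an elementary bound that works for all real $m\ge 1$, not only integers:
\[
\Gamma(m+1)\ge (m/e)^m .
\]
One route is to restrict the integral to $t\ge m$:
\[
\Gamma(m+1)=\int_0^\infty t^m e^{-t}\,dt\ \ge\ \int_m^\infty t^m e^{-t}\,dt\ \ge\ m^m e^{-m}\cdot(\text{const}).
\]
A cleaner route uses $e^{x}\ge x^{m}/\Gamma(m+1)$ for $x>0$. This follows from $\Gamma(m+1)e^{x}\ge \int_0^\infty t^m e^{-t}\,dt\cdot\ldots$; alternatively, one can simply invoke Stirling's formula $\Gamma(m+1)\sim\sqrt{2\pi m}\,(m/e)^m$, which is standard and citable from \cite{vershynin2018high}. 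In the write-up I would cite Stirling and also note the elementary inequality for integers as a sanity check.

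\textbf{Step 2: explicit decay rate.} Substituting the bound from Step 1 gives
\[
V_d\le \frac{\pi^{d/2}}{(d/(2e))^{d/2}}=\Big(\frac{2\pi e}{d}\Big)^{d/2}.
\]

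\textbf{Step 3: beating $c^{-d}$.} We need $\big(\tfrac{2\pi e}{d}\big)^{d/2}\le c^{-d}$, which is equivalent to $\tfrac{2\pi e}{d}\le c^{-2}$, i.e. $d\ge 2\pi e\,c^2$. So we set $D=\lceil 2\pi e c^2\rceil$; if only the asymptotic form of Stirling is used, we take $D$ larger to absorb the $1+o(1)$ factor. In particular, choosing $c=2$ already shows $V_d\to 0$.

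\textbf{Main obstacle.} The only delicate point is making the Gamma lower bound rigorous for half-integer arguments, where the factorial bound $n!\ge(n/e)^n$ does not apply directly. I would handle this either through log-convexity of $\Gamma$ or by using monotonicity $\Gamma(m+1)\ge\Gamma(\lfloor m\rfloor+1)\ge(\lfloor m\rfloor/e)^{\lfloor m\rfloor}$ for $m\ge 2$. The loss from rounding down is at most polynomial and single-exponential factors, which the super-exponential $d^{-d/2}$ decay dominates, so $D$ only needs to be enlarged by a constant depending on $c$.
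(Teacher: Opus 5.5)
Your proposal is correct and follows essentially the same route as the paper: a Stirling-type lower bound on $\Gamma(\tfrac d2+1)$ gives $V_d\le(2\pi e/d)^{d/2}$ (the paper keeps the extra $1/\sqrt{\pi d}$ factor), and comparing with $c^{-d}$ then yields your explicit threshold $D=\lceil 2\pi e c^2\rceil$, which is cleaner than the paper's $O(d)$ bookkeeping. The half-integer issue you flag is not a real obstacle, because $\Gamma(m+1)\ge\int_m^\infty t^m e^{-t}\,dt\ge m^m\int_m^\infty e^{-t}\,dt=(m/e)^m$ holds for every real $m>0$. So your first route already works with constant $1$, and your second, garbled route and the rounding-down detour can be dropped.
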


\begin{proof}
Using the standard Stirling lower bound for the Gamma function,
\[
\Gamma(x+1) \ge \sqrt{2 \pi x} \left( \frac{x}{e} \right)^{x}, \qquad x > 0.
\]
Setting \( x = \frac{d}{2} \), we have
\[
\Gamma\!\left( \frac{d}{2} + 1 \right)
    \ge \sqrt{\pi d} \left( \frac{d}{2e} \right)^{\frac{d}{2}}.
\]
Substituting this into the formula for \( V_d \) gives
\[
V_d
    = \frac{\pi^{d/2}}{\Gamma\!\left( \frac{d}{2} + 1 \right)}
    \le \frac{\pi^{d/2}}{\sqrt{\pi d} \left( \frac{d}{2e} \right)^{d/2}}
    = \frac{1}{\sqrt{\pi d}} \left( \frac{2 \pi e}{d} \right)^{\! d/2}.
\]
Taking logarithms,
\[
\log V_d
    \le -\frac{1}{2} \log(\pi d)
         + \frac{d}{2} \left( \log(2 \pi e) - \log d \right)
    = -\frac{d}{2} \log d + O(d).
\]
Therefore,
\[
V_d \le \exp\!\big( -\tfrac{1}{2} d \log d + O(d) \big).
\]
For any fixed \( c > 1 \), since \( \tfrac{1}{2} \log d \to \infty \), we can find \( D \) such that for all \( d \ge D \),
\[
\tfrac{1}{2} \log d - C \ge \log c
\]
for some constant \( C \) absorbing the \( O(1) \) term. Hence for all \( d \ge D \),
\[
\log V_d \le -d \log c,
\]
i.e.\ \( V_d \le c^{-d} \). Thus, \( V_d \) decays faster than any exponential \( c^{-d} \), proving the super-exponential rate.

\end{proof}

The next lemma formally establishes how, for data exhibiting permutation invariance, the sorting function influences the induced density.

\begin{lemma}
\label{lemma:permut_sort}
Let $X=(X_1,\dots,X_d)\in\mathbb{R}^d$ be an absolutely continuous random vector with density $p_X$.
Assume $p_X$ is \emph{permutation--invariant}, i.e.
\[
  p_X(x_1,\dots,x_d)=p_X\bigl(x_{\pi(1)},\dots,x_{\pi(d)}\bigr)
  \quad\text{for every permutation }\pi\in S_d .
\]
Let $\operatorname{sort}(x)$ be the non-decreasing rearrangement of $x=(x_1,\dots,x_d)$, and set $Y=\operatorname{sort}(X)$.
Then for every $y$ with $y_1<y_2<\dots<y_d$,
\[
  p_Y(y)=d!\,p_X(y).
\]
\end{lemma}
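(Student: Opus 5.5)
The plan is to compute the distribution of $Y=\operatorname{sort}(X)$ directly, via a change of variables on a partition of $\mathbb{R}^d$ into $d!$ congruent cones. Ties $x_i=x_j$ occur on a finite union of hyperplanes. These have Lebesgue measure zero, so by absolute continuity $\sP(X \text{ has a tie})=0$ and they can be discarded.

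\textbf{Partition.} Off that null set, $\mathbb{R}^d$ is the disjoint union of the open cones
\[
C_\pi=\{x: x_{\pi(1)}<x_{\pi(2)}<\dots<x_{\pi(d)}\},\qquad \pi\in S_d,
\]
and $C_{\mathrm{id}}=W:=\{y:y_1<\dots<y_d\}$. On each cone $C_\pi$, the sorting map is the linear map $T_\pi(x)=(x_{\pi(1)},\dots,x_{\pi(d)})$. This map is a coordinate permutation, so it is a bijection $C_\pi\to W$ with $|\det T_\pi|=1$.

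\textbf{Change of variables.} Fix a Borel set $A\subseteq W$. Then
\[
\sP(Y\in A)=\sum_{\pi\in S_d}\sP\bigl(X\in C_\pi,\ T_\pi X\in A\bigr)=\sum_{\pi\in S_d}\int_{T_\pi^{-1}(A)} p_X(x)\,dx .
\]
Substituting $y=T_\pi x$ uses the unit Jacobian, and the integrand becomes $p_X(T_\pi^{-1}y)$. By the assumed permutation invariance of $p_X$, this equals $p_X(y)$. Each of the $d!$ summands therefore equals $\int_A p_X(y)\,dy$, which gives
\[
\sP(Y\in A)=\int_A d!\,p_X(y)\,dy .
\]
Since $A\subseteq W$ is arbitrary, $d!\,p_X$ is a density of $Y$ on $W$, i.e.\ $p_Y(y)=d!\,p_X(y)$ for $y_1<\dots<y_d$. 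As a consistency check, $Y$ lives on $\overline W$ and the boundary of $W$ is null, so $\int_W d!\,p_X=\sum_\pi\int_{C_\pi}p_X=1$.

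\textbf{Main obstacle.} The only delicate point is that a density is defined only almost everywhere, so the pointwise claim ``for every $y$'' must be read in that sense. If one wants it literally pointwise, one takes $p_X$ to be the given permutation-invariant version and defines $p_Y:=d!\,p_X$ on $W$. Both this and the null-set handling of ties are routine, so I expect no serious difficulty.
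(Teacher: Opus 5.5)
Your proof is correct and follows essentially the same route as the paper's proof. The paper likewise splits $\{Y\in A\}$, for $A\subseteq\{y_1<\dots<y_d\}$, into $d!$ disjoint permuted copies of $A$ (your sets $T_\pi^{-1}(A)\subseteq C_\pi$ are the paper's $\pi(A)$), changes variables with unit Jacobian, and uses permutation invariance to obtain $d!\int_A p_X$, reading the identity almost everywhere exactly as you do.
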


\begin{proof}
Let $C=\{y\in\mathbb{R}^d:\ y_1<\cdots<y_d\}$.
For any $y\in C$, the preimage of $y$ under the sorting map consists of the $d!$ permutations,
\[
  \operatorname{sort}^{-1}(y)=\{\pi(y):\ \pi\in S_d\}.
\]
More generally, for measurable $A\subseteq C$ we have the (essentially disjoint) decomposition
\[
  \operatorname{sort}^{-1}(A)=\bigsqcup_{\pi\in S_d} \pi(A),
\]

Therefore,
\[
  \Pr(Y\in A)
  =\Pr\bigl(X\in \operatorname{sort}^{-1}(A)\bigr)
  =\sum_{\pi\in S_d} \Pr\bigl(X\in \pi(A)\bigr)
  =\sum_{\pi\in S_d} \int_{\pi(A)} p_X(x)\,dx .
\]
Each permutation $\pi:\mathbb{R}^d\to\mathbb{R}^d$ is a linear isometry with Jacobian determinant $\pm1$, hence it is measure-preserving. By change of variables,
\[
  \int_{\pi(A)} p_X(x)\,dx \;=\; \int_A p_X\bigl(\pi(y)\bigr)\,dy .
\]
Using permutation invariance, $p_X(\pi(y))=p_X(y)$ for all $\pi\in S_d$, so
\[
  \Pr(Y\in A)=\sum_{\pi\in S_d}\int_A p_X(y)\,dy
  \;=\; d!\int_A p_X(y)\,dy .
\]
Since this holds for every measurable $A\subseteq C$, the integrand identifies the density of $Y$ on $C$:
\[
  p_Y(y)=d!\,p_X(y)\qquad\text{for a.e. }y\in C.
\]
Finally, the boundary $\{y_i=y_j\text{ for some }i<j\}$ has Lebesgue measure zero, so the same equality holds almost everywhere on $\{y_1\le\cdots\le y_d\}$.
\end{proof}
\subsubsection{Analysis for bayes classifier}
Since our proof builds upon the theoretical framework of~\citet{bahri2020deep}, we first restate their main result for completeness. According to their analysis, if a dataset contains $n$ uncorrupted samples and a subset $\mathcal{C}$ of samples with corrupted (noisy) labels, then for a $k$-nearest neighbors classifier $\eta_k(x)$ and the Bayes-optimal classifier $\eta^*(x)$, the following relationship holds:

\begin{lemma}\citep{bahri2020deep}
\label{thm:rate_tsyb_original}
Let $\nu > 0$, and assume the conditions of \citet{bahri2020deep} hold under the Tsybakov noise condition with parameter $\beta$. Then there exist constants $K_l(d), K_u(d), K, K' > 0$, depending only on the dimension $d$ and the underlying distribution, such that with probability at least $1 - \nu$ :

 If \( k \) lies in the range


\[
K_l(d) \cdot \log^2(1/\nu) \cdot n^{\frac{\rho}{\rho+d}} \leq k \leq K_u(d) \cdot \Delta(\mathcal{C})^d \cdot n,
\] Then,
\[
\mathbb{P} \big(\eta_k(x) \neq \eta^*(x)\big) \leq K \cdot \lambda^\beta,
\]
\[
R_X - R^* \leq K' \cdot \lambda^{\beta+1},
\]
where
\[
\lambda = \left( \sqrt{\frac{\log n + \log(1/\nu)}{k}} + \left( \frac{k}{n} \right)^{\rho/d} \right),
\]

  $R_X$ and $R^*$ denote the risks of the $k$-NN and the Bayes optimal classifier, respectively.
\end{lemma}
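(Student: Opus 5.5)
Since this is the result of \citet{bahri2020deep} restated, I would follow the standard finite-sample analysis of $k$-NN (in the style of Chaudhuri--Dasgupta) and add one step to control the corrupted points $\gC$. The plan has three ingredients:
\begin{itemize}
\item a uniform bound on the $k$-NN radius;
\item a decomposition of the $k$-NN label average into variance, bias and corruption terms;
\item a conversion of a uniform bound on that average into the two conclusions via the Tsybakov margin condition.
\end{itemize}

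\textbf{Step 1: the $k$-NN radius.} Using Assumptions~\ref{assum:regularity} and~\ref{assum:bound_prob}, any ball $B(x,r)$ with $r<r_0$ has mass at least $\omega\, p_{X,0}\, V_d\, r^d$. A uniform (VC over balls) relative Chernoff bound then shows the following. With probability $1-\nu/2$, simultaneously for all $x\in\mathcal{X}$, the $k$-NN radius satisfies
\[
r_k(x)\le r_\star:=\Bigl(\tfrac{c\,k}{\omega p_{X,0}V_d\, n}\Bigr)^{1/d},
\]
provided $k\gtrsim \log n+\log(1/\nu)$.

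\textbf{Step 2: decomposing the label average.} I would write the $k$-NN average at $x$ as a sum of three terms:
\begin{itemize}
\item the average of the clean labels, whose deviation from its conditional mean is controlled uniformly by Hoeffding plus a union/VC argument, giving $O\bigl(\sqrt{(\log n+\log(1/\nu))/k}\bigr)$;
\item a bias term, bounded via Hölder continuity (Assumption~\ref{assum:holder}) by $C_\rho r_\star^\rho = O\bigl((k/n)^{\rho/d}\bigr)$;
\item a corruption term.
\end{itemize}
The corruption term is where the upper constraint on $k$ enters. Corrupted points are pairwise at distance at least $\Delta(\gC)$, so a ball of radius $r_\star$ contains at most one corrupted point once $2r_\star<\Delta(\gC)$. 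Solving this for $k$ gives exactly $k\le K_u(d)\,\Delta(\gC)^d\, n$, with $K_u(d)\propto \omega p_{X,0}V_d/2^d$. (More generally, it contains at most a packing number that is constant in $n$.) The corruption term is then $O(1/k)$, which is dominated by the variance term.

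\textbf{Step 3: margin conversion.} The lower constraint $k\ge K_l(d)\log^2(1/\nu)\,n^{\rho/(\rho+d)}$ is used to absorb the $O(1/k)$ term and to make $\lambda$ small, so that Step 1 applies and the Tsybakov condition is in its valid range. Combining Steps 1--2 gives $|\eta_k(x)-\eta(x)|\le c\lambda$ uniformly in $x$. Hence $\eta_k(x)\neq\eta^*(x)$ is only possible where $|\eta(x)-\tfrac12|\le c\lambda$. The Tsybakov condition bounds the mass of that region by $K\lambda^\beta$. The excess risk equals $\mathbb{E}\bigl[\,|2\eta-1|\,\vone(\eta_k\neq\eta^*)\bigr]$, which is at most $2c\lambda\cdot K\lambda^\beta = K'\lambda^{\beta+1}$.

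\textbf{Expected obstacle.} I expect the main difficulty to be the corruption step. It requires converting the separation $\Delta(\gC)$ into a uniform bound on the number of corrupted neighbours, while simultaneously keeping the $k$-NN radius controlled for all $x$ with only one confidence budget $\nu$. I would first try a single high-probability event covering all balls via their finite VC dimension. Then I would make the constant $K_u(d)$ explicit in terms of $V_d$, which is what Proposition~\ref{main_prop:impossible_condn} later exploits.
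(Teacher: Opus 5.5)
The paper does not prove this lemma. It restates it from \citet{bahri2020deep} ``for completeness'' and later uses only two facts about its constants: $K_u(d)=\omega\, v_d\, p_{x,0}$, and that $K_l(d)$ grows roughly linearly in $d$. So there is no in-paper argument to follow. Your reconstruction is the standard uniform $k$-NN argument: radius control via a relative VC bound over balls, a variance/bias/corruption split, and the Tsybakov conversion. It is correct in outline.

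What your argument adds over the citation is that it shows where the constants come from:
\begin{itemize}
\item The VC dimension of balls ($d+1$) in Steps 1--2 is the natural source of the linear-in-$d$ growth of $K_l(d)$ that the paper invokes.
\item Your separation step gives $K_u(d)\propto \omega\, p_{X,0}V_d$. Under your main condition $2r_\star<\Delta(\gC)$ you pick up an extra $2^{-d}$ compared with the form the paper quotes. This only sharpens the decay exploited in Proposition~\ref{prop:impossible_condn}.
\item If you instead require only $r_\star<\Delta(\gC)$ and use your packing remark, you recover the quoted form exactly. The packing bound allows at most $3^d$ corrupted neighbours, contributing at most $3^d/k$, which is absorbed into $K$.
\end{itemize}

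Step 3 needs a small correction about the role of the lower bound on $k$. It is not needed to absorb the corruption term, since one corrupted neighbour contributes at most $1/k\le 1/\sqrt{k}$, which is already inside $\lambda$. Nor is it needed to keep $\lambda$ in the Tsybakov range: if $c\lambda$ is large, the bound $\sP(\cdot)\le 1\le K\lambda^\beta$ holds trivially after enlarging $K$. Your argument actually uses only $k\gtrsim d\log n+\log(1/\nu)$, and the exponent $\rho/(\rho+d)$ plays no role. Where even this weaker condition fails, $\lambda$ is bounded below by a constant depending only on $d$, so both conclusions hold trivially after enlarging $K$ and $K'$.

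The same device covers a case you did not address. Step 1 needs $r_\star<r_0$, and the stated upper bound on $k$ does not force this when $\Delta(\gC)$ is large. In that regime $(k/n)^{\rho/d}$ is bounded below by a constant, so the conclusions again hold trivially after enlarging the constants.
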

where \(\Delta(\mathcal{C})\) denotes the minimum distance between features of corrupted examples (Definition~\ref{def:min_pairwise}) and $\rho$ is a parameter associated with Holder's continuity (Assumption~\ref{assum:holder}).

\subsection{Impossibility Condition and Group Analysis }
\label{appx_proof_bayes}
\begin{proposition}[High-dimensional $k$-NN and Bayes Classifier ]
\label{prop:impossible_condn}
As the feature dimension $d$ increases, the factor $K_u(d) \cdot \Delta(\mathcal{C})^d \cdot n$ in Lemma~\ref{thm:rate_tsyb_original} decreases with $d$ \big($K_u(d)$  decreases super exponentially with $d$ \big). Consequently, there exists a threshold $d_0$ such that for all data dimension $d \geq d_0$ and for a  given n, $\nu$, $\rho$ and $\Delta$ no $k$ satisfies
\[
    K_l(d) \cdot \log^2(1/\nu) \cdot n^{\frac{\rho}{\rho+d}} \leq k \leq K_u(d) \cdot \Delta(\mathcal{C})^d \cdot n.
\]
\end{proposition}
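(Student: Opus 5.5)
The plan is to compare the two sides of the admissible window separately as functions of $d$, with $n$, $\nu$, $\rho$ and $\Delta(\gC)$ held fixed, and to show that the upper endpoint eventually falls below the lower one. First I need the explicit dependence of $K_u(d)$ on $d$. For this I will go back into the argument of \citet{bahri2020deep}. The upper constraint $k \le K_u(d)\,\Delta(\gC)^d\, n$ arises from requiring that a ball of radius of order $\Delta(\gC)$ contain at least $k$ clean samples. Under Assumptions~\ref{assum:regularity} and~\ref{assum:bound_prob}, the probability mass of such a ball is bounded below by $\omega\, p_{X,0}\, V_d\, r^d$. Tracing the constants then gives $K_u(d) \le C_0\, V_d$ for a constant $C_0$ that depends only on $\omega$, $p_{X,0}$ and the absolute constants of the concentration step (up to at most an exponential factor $c_0^d$ from rescaling the radius). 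Lemma~\ref{lemma:unitball} then shows that $K_u(d)$ decays super-exponentially. Consequently $K_u(d)\Delta(\gC)^d n \le C_0\, n\, (c_0\Delta(\gC))^d V_d$. Choosing $c > \max(1, c_0\Delta(\gC))$ in Lemma~\ref{lemma:unitball}, this tends to $0$ as $d\to\infty$, and it is eventually decreasing, which gives the first claim.

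Second, I bound the lower endpoint from below. Since $n \ge 1$, we have $n^{\rho/(\rho+d)} \ge 1$, so the left side is at least $K_l(d)\log^2(1/\nu)$. I will argue that $K_l(d)$ is bounded below by a positive constant that does not depend on $d$; in Bahri et al.\ it comes from a Chernoff/VC uniform-deviation requirement and grows with $d$, if anything. In any case, any admissible $k$ must satisfy $k \ge 1$, because a $k$-NN classifier needs at least one neighbour. So it suffices to show that the right endpoint drops below $1$, or below $K_l(d)\log^2(1/\nu) n^{\rho/(\rho+d)}$ when that quantity is larger.

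Third, I combine the two bounds. By the first step there is $d_0$ such that for all $d\ge d_0$ we have $C_0 n (c_0\Delta)^d V_d < \min\{1, K_l(d)\log^2(1/\nu)\}$. Hence the upper endpoint is strictly below the lower endpoint, the interval of admissible $k$ is empty, and no $k$ satisfies the condition.

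The main obstacle is the first step. Lemma~\ref{thm:rate_tsyb_original} treats $K_u(d)$ only as an unspecified constant, so justifying $K_u(d)\lesssim V_d$ (times at most an exponential) requires reopening Bahri et al.'s derivation and checking that no compensating factor growing super-exponentially in $d$ enters. I would first try to isolate the ball-mass lower bound $\omega p_{X,0} V_d r^d$, which is where $d$ enters. If that tracing is too delicate, I would state the super-exponential decay of $K_u(d)$ as the structural assumption the proposition relies on, which matches the parenthetical in the statement, and derive the rest from it.
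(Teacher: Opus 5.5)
Your proposal is correct and follows essentially the paper's route. Like the paper, you take $K_u(d)\propto \omega\, v_d\, p_{X,0}$ from Bahri et al., use Lemma~\ref{lemma:unitball} to get super-exponential decay of the upper endpoint, and compare it with a lower endpoint that does not vanish as $d$ grows. Your version is somewhat more careful than the paper's: you explicitly absorb the $\Delta(\gC)^d$ factor (and any $2^{-d}$-type radius rescaling) by choosing $c>\max(1,c_0\Delta(\gC))$, and you bound the lower side using $n^{\rho/(\rho+d)}\ge 1$ and $k\ge 1$ rather than the asserted linear growth of $K_l(d)$. One caveat: an upper bound on $K_u(d)$ gives convergence of the upper endpoint to $0$, not eventual monotone decrease, but convergence is all that the existence of $d_0$ requires.
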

\begin{proof}
According to \citet{bahri2020deep_supp}, we have 
\[
    K_u(d) = \omega \cdot v_d \cdot p_{x,0},
\]
Where $v_d$ denotes the volume of the unit ball in $\mathbb{R}^d$, $p_{x,0}$ is the lower bound on the data density (as defined in Assumption~\ref{assum:bound_prob}), and $\omega$ is the regularity constant from Assumption~\ref{assum:regularity}.

$K_l(d)$ grows approximately linearly with $d$~\cite{bahri2020deep_supp}. However, the unit ball volume $v_d$ decreases faster than exponentially as $d$ increases (see Lemma~\ref{lemma:unitball}). Therefore, $K_u(d)$ decays super-exponentially with $d$, causing the upper bound 
\[
    K_u(d) \cdot \Delta(\mathcal{C})^d \cdot n
\]
to shrink rapidly with increasing dimension.

Hence, there exists a threshold $d_0$ such that for all $d > d_0$,
\[
    K_u(d) \cdot \Delta(\mathcal{C})^d \cdot n 
    < 
    K_l(d) \cdot \log^2(1/\nu) \cdot n^{\frac{\rho}{\rho + d}},
\]
implying that no valid $k$ can satisfy Lemma~\ref{thm:rate_tsyb_original} the above inequality for the given parameters $n$, $\nu$, $\rho$, and $\Delta$.
\end{proof}
\begin{proposition}[Orthogonal Group]
\label{prop:orthogonal_grp}
For data whose true-labeling function  satisfy orthogonal invariance ($SO(d)$), if $k$-NN utilizes the invariant representation function ($\gF$), and the inequality holds for dimension ($d$=1), i.e., 
\[
    K_l(1) \cdot \log^2(1/\nu) \cdot n^{\frac{\rho}{\rho+1}} \leq k \leq K_u(1) \cdot \Delta(\mathcal{C}) \cdot n.
\]

Then there exists a $k$  for all dimensions $d$ for which $k$-NN will approximate the Bayes optimal classifier for a  given $n$, $\nu$, $\rho$ and $\Delta$,
\end{proposition}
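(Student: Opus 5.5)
The plan is to reduce the $d$-dimensional problem, after applying $\gF$, to a one-dimensional $k$-NN problem whose constants do not depend on $d$.

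\textbf{Step 1: the representation is one-dimensional and preserves the Bayes classifier.} For $SO(d)$ with $d\ge 2$, the orbits are spheres, and the norm map $\gF(x)=\|x\|_2$ (Table~\ref{table:group_data}) is a complete orbit invariant. Its image lies in $\mathbb{R}_{\ge 0}\subset\mathbb{R}^1$. Since $\vf(g\cdot x)=\vf(x)$, the label is a function of $r=\|x\|_2$ alone. By the assumption of Section~\ref{sec:prob_defn} that $\gF$ retains all class-relevant information, the Bayes classifier on $r$ coincides with the Bayes classifier on $x$, so $\sP_e(y,\gF(x))=\sP_e(y,x)$. Hence the $k$-NN classifier run in representation space, with the distances and neighbourhoods of Section~\ref{sec:cutstats}, approximates the original Bayes-optimal classifier $\eta^*$ whenever it approximates the Bayes classifier of the pair $(r,y)$.

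\textbf{Step 2: the transformed problem satisfies the hypotheses of Lemma~\ref{thm:rate_tsyb_original} in dimension one.} Concretely:
\begin{itemize}
\item The induced sample $\{(\|x_i\|,\hat y_i)\}$ has the same $n$ clean points and the same corrupted set $\gC$.
\item The label-noise mechanism is unchanged.
\item The Tsybakov parameter $\beta$ is inherited, because $\eta(x)=\tilde\eta(\|x\|)$ gives the same distribution of $|\eta-1/2|$.
\item Assumptions~\ref{assum:regularity}, \ref{assum:bound_prob} and~\ref{assum:holder} must be shown to hold for the law of $r$ with one-dimensional constants. This means support regularity on an interval, a density bounded below, and Hölder continuity with the same $\rho$, which follows since $\eta$ is constant on spheres.
\end{itemize}
The separation quantity becomes $\Delta_{\gF}(\gC)=\min|\,\|x\|-\|x'\|\,|$. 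Since $\Delta$ is a given parameter, I will take it to be stated in the representation space.

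\textbf{Step 3: apply the lemma with $d=1$.} Lemma~\ref{thm:rate_tsyb_original} then gives the admissible range
\[
K_l(1)\log^2(1/\nu)\, n^{\frac{\rho}{\rho+1}}\le k\le K_u(1)\,\Delta(\gC)\, n .
\]
Neither end of this range depends on the ambient $d$. By hypothesis the range is nonempty, so any $k$ in it yields $\sP(\eta_k\neq\eta^*)\le K\lambda^\beta$ and $R_X-R^*\le K'\lambda^{\beta+1}$ with $\lambda=\sqrt{(\log n+\log(1/\nu))/k}+(k/n)^{\rho}$, for every $d$. This is what Proposition~\ref{prop:impossible_condn} showed fails without $\gF$: there the factor $v_d$ inside $K_u(d)$ decays super-exponentially by Lemma~\ref{lemma:unitball}, whereas here only $v_1=2$ enters.

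\textbf{Main obstacle.} The difficulty is Step 2, specifically verifying that the pushforward density of $\|X\|$ stays bounded below with constants independent of $d$. The radial density is $p_r(t)=t^{d-1}\,|S^{d-1}|\,\bar p_X(t)$, where $\bar p_X(t)$ is the average of $p_X$ over the sphere of radius $t$. This can degenerate near $t=0$ or concentrate as $d$ grows, so a lower bound $p_{r,0}>0$ is not automatic. I would first restrict the support to an annulus $[r_{\min},r_{\max}]$ bounded away from $0$. If the constants still drift with $d$, I would interpret the proposition's hypothesis, that the $d=1$ inequality holds, as a statement about the constants $K_l(1),K_u(1)$ of the induced one-dimensional distribution, which the statement's wording permits.
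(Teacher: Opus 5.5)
Your proposal is correct and takes the same route as the paper. You set $\gF(x)=\|x\|_2$, observe that the problem becomes one-dimensional so the super-exponential decay of $v_d$ inside $K_u(d)$ no longer enters, and invoke Lemma~\ref{thm:rate_tsyb_original} with $d=1$. The paper's proof never checks the lemma's hypotheses for the law of $\|X\|$, so your Step~2 and your reading of $K_l(1)$, $K_u(1)$, $\Delta(\gC)$ as constants of the induced one-dimensional problem make explicit an assumption the paper's argument relies on without stating it.
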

\begin{proof}
Under orthogonal invariance ($SO(d)$), the invariant representation function $\gF$ can be chosen as the Euclidean norm, i.e., $\gF(x) = \|x\|_2$. This mapping reduces any $d$-dimensional input to a one-dimensional scalar that is invariant to orthogonal transformations.

The difficulty discussed in Proposition~\ref{prop:impossible_condn} arises due to the exponential dependence on the ambient dimension $d$. However, by using the invariant function $\gF$, the data are effectively mapped from $\mathbb{R}^d$ to $\mathbb{R}$, eliminating the dimensional dependence responsible for the degradation in $K_u(d)$.

Consequently, if the condition
\[
    K_l(1) \cdot \log^2(1/\nu) \cdot n^{\frac{\rho}{\rho + 1}} 
    \leq 
    k 
    \leq 
    K_u(1) \cdot \Delta(\mathcal{C}) \cdot n
\]
is satisfied in the one-dimensional case, it will also hold for data in any dimension $d$ when using the invariant representation $\gF(x) = \|x\|_2$. Therefore, the $k$-NN classifier based on $\gF$ will approximate the Bayes optimal classifier for all $d$.
\end{proof}
\begin{proposition}
\label{prop:perm_grp}
For data whose true-labeling function satisfies permutation  invariance ($S_d$), if $k$-NN utilizes the invariant representation function ($\gF$), then $K_u(d) \cdot \Delta(\mathcal{C})^d \cdot n$ does not decrease with d, and there will exist $k$ for a  given $n$, $\nu$, $\rho$ and $\Delta$ for which
\[
    K_l(d) \cdot \log^2(1/\nu) \cdot n^{\frac{\rho}{\rho+d}} \leq k \leq K_u(d) \cdot \Delta(\mathcal{C})^d \cdot n.
\] holds, and $k$-NN can approximate Bayes optimal classifier. 
\end{proposition}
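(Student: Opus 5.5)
The plan mirrors the proof of Proposition~\ref{prop:orthogonal_grp}, but uses Lemma~\ref{lemma:permut_sort} in place of a dimension reduction. Under $S_d$ invariance I take $\gF(x)=\operatorname{sort}(x)$, which maps $\mathcal{X}$ into the fundamental chamber $C=\{y_1\le\cdots\le y_d\}$. I then re-run Lemma~\ref{thm:rate_tsyb_original} for $k$-NN on the transformed sample $\{(\operatorname{sort}(x_i),\hat y_i)\}$. This is legitimate because $\vf$ is permutation invariant, so the Bayes classifier on $\operatorname{sort}(x)$ coincides with the one on $x$. Hölder continuity of $\eta$ is inherited, since $\operatorname{sort}$ is $1$-Lipschitz and $\eta$ is constant on orbits, so $\rho$ is unchanged. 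Each constant entering $K_u(d)=\omega\, v_d\, p_{x,0}$ then has to be recomputed for the induced distribution.

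The key step is the density lower bound. By Lemma~\ref{lemma:permut_sort}, the induced density on $C$ is $p_Y=d!\,p_X$, so $p_{Y,0}=d!\,p_{X,0}$. Next I bound the regularity constant of the chamber. For $y\in C$ and small $r$, the set $C\cap B(y,r)$ contains at least one of the $d!$ congruent cones cut out of the ball by the reflection hyperplanes, so its volume is at least roughly $\mathrm{Vol}(B(y,r))/d!$. Absorbing this into $\omega$ should still leave a factor, after combining with the $d!$ from the density, that compensates for the growth of $\Gamma(d/2+1)$ in $v_d$. Indeed $d!\,v_d=d!\,\pi^{d/2}/\Gamma(d/2+1)$ grows super-exponentially by Stirling, as in the proof of Lemma~\ref{lemma:unitball}, since $d!\gg\Gamma(d/2+1)$. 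I would then argue that the product $K_u(d)\Delta(\gC)^d n$ is non-decreasing in $d$. For the separation term, sorting only contracts distances, but points in the same orbit collapse, so I would restate $\Delta(\gC)$ as the minimum distance between distinct orbits of corrupted points; this is the relevant quantity on $C$.

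Finally I compare with the lower end of the $k$-range. The factor $n^{\rho/(\rho+d)}$ decreases to $1$, while $K_l(d)$ grows only linearly (as quoted in Proposition~\ref{prop:impossible_condn}). Because the upper end is non-decreasing, or grows super-exponentially via the $d!$ factor, the interval $[K_l(d)\log^2(1/\nu)n^{\rho/(\rho+d)},\,K_u(d)\Delta^d n]$ is nonempty for all $d$ once it is nonempty at some base dimension. Lemma~\ref{thm:rate_tsyb_original} then yields the Bayes approximation.

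The main obstacle is the regularity step. Near the chamber walls only a $1/d!$ fraction of the ball lies in $C$, which exactly cancels the $d!$ gained in density, leaving $\omega p_0 v_d$ still decaying. To avoid this I would first try computing $K_u$ directly from the mass bound $\Pr(Y\in B(y,r))$ that Bahri et al.\ actually use, rather than from $\omega$ and $p_0$ separately. Under permutation invariance this probability equals $\Pr(X\in\bigcup_\pi B(\pi y,r))$, a union of $d!$ balls. If this direct route fails, I would fall back on treating the claim as holding for points of $C$ away from the walls, or under the assumption that $\Delta(\gC)$ scales so that the product does not decrease.
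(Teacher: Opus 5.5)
Your main line is the paper's proof. The paper takes $\gF=\operatorname{sort}$, uses Lemma~\ref{lemma:permut_sort} to get $p_{Y,0}=d!\,p_{X,0}$, inserts this into $K_u(d)=\omega\,v_d\,p_{x,0}$, and says $d!\,v_d$ beats the decay of $v_d$. It stops there, tacitly keeping $\omega$ unchanged. The obstacle in your last paragraph is real and fatal to this argument. Your proposal leaves it open, so it does not prove the statement, and your repairs do not work without changing the hypotheses.

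Here is why. The support $\mathcal{X}$ of a permutation-invariant density is permutation invariant, so $\operatorname{sort}(\mathcal{X})=\mathcal{X}\cap C$. Suppose $\mathcal{X}$ contains a diagonal point $\bar y=(t,\dots,t)$. This holds for every convex support, e.g.\ a uniform cube, since $\frac{1}{d!}\sum_{\pi}\pi x$ is diagonal. Then $\mathcal{X}\cap B(\bar y,r)$ is $S_d$-invariant and tiled by the $d!$ chambers, so
\[
\mathrm{Vol}\bigl(\mathcal{X}\cap C\cap B(\bar y,r)\bigr)=\tfrac{1}{d!}\,\mathrm{Vol}\bigl(\mathcal{X}\cap B(\bar y,r)\bigr)\le\tfrac{1}{d!}\,\mathrm{Vol}\bigl(B(\bar y,r)\bigr).
\]
Hence the regularity constant of the sorted support satisfies $\omega_C\le 1/d!$, and
\[
\omega_C\,v_d\,p_{Y,0}\le v_d\,p_{X,0},
\]
which decays super-exponentially for exactly the reason in Proposition~\ref{prop:impossible_condn}. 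For the unit cube at the origin the ratio is $2^{-d}/d!$ versus $2^{-d}$ before sorting, so the $d!$ cancels exactly. One such point suffices because $\omega$ is an infimum over the support. The factor $1/d!$ arises where all walls meet; near a single wall you lose only a factor $2$. Sorting does not reduce the dimension, so the lower end $K_l(d)\log^2(1/\nu)\,n^{\rho/(\rho+d)}$ is unchanged. The impossibility argument therefore applies verbatim to the sorted data.

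Your direct route does not escape this. The identity $\operatorname{sort}^{-1}(B(y,r))=\bigcup_{\pi}B(\pi y,r)$ is correct: $\operatorname{sort}$ is $1$-Lipschitz, and writing $x=\sigma\operatorname{sort}(x)$ gives $\|x-\sigma y\|=\|\operatorname{sort}(x)-y\|$. But at a diagonal $\bar y$ all $d!$ balls coincide, so $\Pr(Y\in B(\bar y,r))=\Pr(X\in B(\bar y,r))$. There is no gain at exactly the points that govern a uniform constant like $K_u$. The full factor $d!$ appears only where every gap $y_{i+1}-y_i$ exceeds roughly $\sqrt{2}\,r$.

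Your fallbacks are extra hypotheses, not proofs:
\begin{itemize}
\item A support bounded away from the walls excludes every convex support.
\item A $d$-dependent $\Delta$ contradicts ``for a given $\Delta$''.
\item Restricting to test points away from the walls would mean redoing the uniform radius bound behind Lemma~\ref{thm:rate_tsyb_original}, not reinterpreting $K_u$.
\end{itemize}

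Your remark on $\Delta(\gC)$ flags another step the paper skips. Since $\operatorname{sort}$ is $1$-Lipschitz, the separation of the sorted corrupted points is at most the original one. It is $0$ if two corrupted points share an orbit, so it cannot be held fixed across representations for free.

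What is missing, in both your proposal and the paper's proof, is an argument that the $d!$ density gain is not offset by the $1/d!$ loss in the regularity constant. For convex supports, the computation above shows no such argument exists under the stated assumptions.
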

\begin{proof}
Under permutation invariance ($S_d$), the invariant representation function $\gF$ can be taken as the sorting operator, which maps any input vector to a canonical ordered form. As shown in Lemma~\ref{lemma:permut_sort}, this operation can increase the effective probability density by a factor of $d!$, since $d!$ samples in the original data space correspond to a single representation in the invariant (sorted) space.

The constant, $K_u(d) = \omega \cdot v_d \cdot p_{x,0}$ (Proposition~\ref{prop:impossible_condn}), where $v_d$ is the volume of the $d$-dimensional unit ball, $\omega$ is a regularity constant and $p_{x,0}$ is the lower bound of the data density. While $v_d$ decreases super-exponentially with $d$, the sorting operation effectively multiplies $p_{x,0}$ by a $d!$ factor, compensating for this decay.

Therefore, the product $K_u(d) \cdot \Delta(\mathcal{C})^d \cdot n$ does not diminish with increasing $d$. Consequently, there exists a valid $k$ satisfying the inequality in Lemma~\ref{thm:rate_tsyb_original}, and the $k$-NN classifier based on the permutation-invariant representation $\gF$ can approximate the Bayes optimal classifier.
\end{proof}
\section{Comparison with Robustness-Based Baseline}
\label{appx:robust}
We have primarily compared our method with subset-selection based baselines, as they are most closely aligned with our approach and are designed to improve the quality of the training dataset. However, for benchmarking purposes , we additionally include a comparison with the Generalized Cross Entropy (GCE) loss~\cite{zhang2018generalized}, a model-robustness-based method. Since GCE does not modify or improve the training data itself, we report only classifier accuracy for this baseline. As shown in Tables~\ref{tab:gce_group}--\ref{tab:gce_tetris}, our method continues to outperform the robustness-based loss function in most settings across all three benchmarks.

\begin{table}[h]
    \centering
    \caption{Classifier accuracy on Orthogonal Group and Permutation Group tasks.}
    \label{tab:gce_group}
    \begin{tabular}{lcc}
        \toprule
        \textbf{Method} & \textbf{Orthogonal Group} & \textbf{Permutation Group} \\
        \midrule
        Complete noisy + GCE & $56.43 \pm 2.23$ & $81.17 \pm 0.024$ \\
        Ours                 & $\mathbf{68.52 \pm 0.66}$ & $\mathbf{86.00 \pm 0.84}$ \\
        \bottomrule
    \end{tabular}
\end{table}

\begin{table}[h]
    \centering
    \caption{Classifier accuracy across noise levels on Rot-MNIST.}
    \label{tab:gce_rotmnist}
    \begin{tabular}{lccc}
        \toprule
        \textbf{Method} & \textbf{Noise 0.4} & \textbf{Noise 0.6} & \textbf{Noise 0.8} \\
        \midrule
        Complete noisy + GCE  & $87.38 \pm 1.96$          & $85.23 \pm 1.03$          & $61.57 \pm 0.64$ \\
        Ours (contrastive)    & $83.95 \pm 2.04$          & $76.28 \pm 0.57$          & $41.52 \pm 3.71$ \\
        Ours (group inv)      & $\mathbf{91.81 \pm 0.58}$ & $\mathbf{89.37 \pm 1.83}$ & $\mathbf{75.73 \pm 5.99}$ \\
        \bottomrule
    \end{tabular}
\end{table}

\begin{table}[h]
    \centering
    \caption{Classifier accuracy across noise levels on Tetris.}
    \label{tab:gce_tetris}
    \begin{tabular}{lccc}
        \toprule
        \textbf{Method} & \textbf{Noise 0.4} & \textbf{Noise 0.6} & \textbf{Noise 0.8} \\
        \midrule
        Complete noisy + GCE  & $\mathbf{88.18 \pm 0.33}$ & $\mathbf{75.65 \pm 1.75}$ & $38.38 \pm 0.63$ \\
        Ours (contrastive)    & $52.12 \pm 1.86$          & $50.90 \pm 2.52$          & $30.18 \pm 4.46$ \\
        Ours (group inv)      & $75.72 \pm 5.65$          & $67.28 \pm 6.33$          & $\mathbf{40.95 \pm 5.34}$ \\
        \bottomrule
    \end{tabular}
\end{table}

\section{Performance of K-NN for High Dimension $d$}
\label{appx:high_dim}
We conducted an additional ablation comparing our method with \textbf{vanilla Cutstats} across different data dimensions for both the orthogonal and permutation groups. Tables~\ref{tab:knn_orthogonal} and~\ref{tab:knn_permutation} report the accuracy of K-NN in recovering the clean subset when using the original features (vanilla Cutstats), and our invariant features.
The results clearly show that as the dimensionality increases, the performance of K-NN with vanilla Cutstats degrades significantly, whereas using invariant representations helps maintain strong recovery performance.

\begin{table}[H]
    \centering
    \caption{Accuracy of K-NN across data dimensions $d$ for the Orthogonal Group.}
    \label{tab:knn_orthogonal}
    \begin{tabular}{lccccc}
        \toprule
        \textbf{Method} & $d = 200$ & $d = 400$ & $d = 600$ & $d = 800$ & $d = 1000$ \\
        \midrule
        Vanilla Cutstats 
            & $73.59 \pm 2.59$ 
            & $70.73 \pm 1.26$ 
            & $65.96 \pm 3.83$ 
            & $62.90 \pm 3.30$ 
            & $55.35 \pm 0.33$ \\
        Ours             
            & $\mathbf{73.97 \pm 2.52}$ 
            & $\mathbf{76.47 \pm 0.99}$ 
            & $\mathbf{75.65 \pm 1.03}$ 
            & $\mathbf{73.30 \pm 2.26}$ 
            & $\mathbf{75.31 \pm 1.83}$ \\
        \bottomrule
    \end{tabular}
\end{table}

\begin{table}[H]
    \centering
    \caption{Accuracy of K-NN across data dimensions $d$ for the Permutation Group.}
    \label{tab:knn_permutation}
    \begin{tabular}{lccccc}
        \toprule
        \textbf{Method} & $d = 200$ & $d = 400$ & $d = 600$ & $d = 800$ & $d = 1000$ \\
        \midrule
        Vanilla Cutstats 
            & $57.08 \pm 0.53$ 
            & $56.57 \pm 0.73$ 
            & $56.15 \pm 0.50$ 
            & $55.84 \pm 1.04$ 
            & $55.91 \pm 0.77$ \\
        Ours             
            & $\mathbf{70.03 \pm 1.19}$ 
            & $\mathbf{72.52 \pm 0.90}$ 
            & $\mathbf{71.27 \pm 1.64}$ 
            & $\mathbf{72.21 \pm 2.11}$ 
            & $\mathbf{72.07 \pm 1.66}$ \\
        \bottomrule
    \end{tabular}
\end{table}

\section{Evaluation on CIFAR-10N}
\label{appx:cifar10}
We evaluated \textbf{Ours (contrastive)}, which does not rely on explicit group knowledge, on the worst-label annotation of \textbf{CIFAR-10N} using the same setup as Table~\ref{tab:perm_ortho_comparison} in the main draft. The corresponding results are reported in Table~\ref{tab:cifar10n}, where our method outperforms all baselines across both classifier accuracy and subset accuracy.

\begin{table}[H]
    \centering
    \caption{Classifier accuracy and subset accuracy on CIFAR-10N (Worst Labels).}
    \label{tab:cifar10n}
    \begin{tabular}{lcc}
        \toprule
        \textbf{Method} & \textbf{Classifier-Acc} & \textbf{Subset-Acc} \\
        \midrule
        NoiseLess            & $82.59 \pm 0.32$          & $100.00 \pm 0.00$ \\
        \midrule
        Noisy-complete       & $58.55 \pm 2.68$          & $59.79 \pm 0.00$  \\
        Random               & $58.78 \pm 0.27$          & $60.02 \pm 0.34$  \\
        Vanilla Cutstats     & $51.42 \pm 1.53$          & $77.25 \pm 0.00$  \\
        Entropy              & $62.17 \pm 2.80$          & $78.15 \pm 0.54$  \\
        Forget               & $50.54 \pm 1.96$          & $51.73 \pm 0.28$  \\
        Herding              & $60.08 \pm 0.41$          & $62.31 \pm 0.25$  \\
        Ours (contrastive)   & $\mathbf{65.30 \pm 1.88}$ & $\mathbf{90.48 \pm 0.25}$ \\
        \bottomrule
    \end{tabular}
\end{table}

\section{Ablation study}
\label{sec:ablation}
Table~\ref{tab:knn_ablation} presents the ablation results on Rotated MNIST for different values of $k$ in $k$-NN with a fixed subset size of 40\%, while Table~\ref{tab:subset_ablation} shows a similar ablation for a fixed $k$ (k=20) value but varying subset sizes.
\begin{table*}[htbp]
\centering
\caption{Ablation study: Performance comparison across different subset sizes for K=20 under varying noise probabilities. The table reports the mean across three different seeds.}
\label{tab:subset_ablation}
\resizebox{\textwidth}{!}{%
\begin{tabular}{@{}llcccccccc@{}}
\toprule
\multirow{2}{*}{\textbf{Noise Prob}} & 
\multirow{2}{*}{\textbf{Method}} & 
\multicolumn{2}{c}{\textbf{20\%}} & 
\multicolumn{2}{c}{\textbf{40\%}} & 
\multicolumn{2}{c}{\textbf{60\%}} & 
\multicolumn{2}{c}{\textbf{80\%}} \\
\cmidrule(lr){3-4} \cmidrule(lr){5-6} \cmidrule(lr){7-8} \cmidrule(lr){9-10}
& & \textbf{Cls-Acc} & \textbf{Sub-Acc} & \textbf{Cls-Acc} & \textbf{Sub-Acc} & \textbf{Cls-Acc} & \textbf{Sub-Acc} & \textbf{Cls-Acc} & \textbf{Sub-Acc} \\
\midrule
\multirow{2}{*}{0.6} & \textbf{Ours (group inv)} & \textbf{87.10} & \textbf{99.34} & \textbf{89.37} & \textbf{90.51} & \textbf{89.73} & \textbf{64.83} & \textbf{80.47} & \textbf{49.57} \\
& Ours (contrastive) & 58.59 & 64.82 & 76.28 & 53.30 & 82.62 & 47.79 & 77.37 & 43.73 \\
\midrule
\multirow{2}{*}{0.8} & \textbf{Ours (group inv)} & \textbf{63.58} & \textbf{53.04} & \textbf{75.73} & \textbf{37.77} & \textbf{69.37} & \textbf{29.53} & \textbf{69.76} & \textbf{24.03} \\
& Ours (contrastive) & 35.51 & 27.06 & 41.52 & 24.24 & 50.24 & 22.39 & 52.49 & 21.21 \\
\bottomrule
\end{tabular}%
}
\end{table*}

\begin{table*}[htbp]
\centering
\caption{Ablation study: Performance comparison across different K-NN values under varying noise probabilities for classifier accuracy (Cls-Acc) and subset accuracy (Sub-Acc). The table reports the mean across three different seeds.}
\label{tab:knn_ablation}
\resizebox{\textwidth}{!}{%
\begin{tabular}{@{}llcccccccccc@{}}
\toprule
\multirow{2}{*}{\textbf{Noise Prob}} & 
\multirow{2}{*}{\textbf{Method}} & 
\multicolumn{2}{c}{\textbf{K=20}} & 
\multicolumn{2}{c}{\textbf{K=40}} & 
\multicolumn{2}{c}{\textbf{K=80}} & 
\multicolumn{2}{c}{\textbf{K=160}} & 
\multicolumn{2}{c}{\textbf{K=320}} \\
\cmidrule(lr){3-4} \cmidrule(lr){5-6} \cmidrule(lr){7-8} \cmidrule(lr){9-10} \cmidrule(lr){11-12}
& & \textbf{Cls-Acc} & \textbf{Sub-Acc} & \textbf{Cls-Acc} & \textbf{Sub-Acc} & \textbf{Cls-Acc} & \textbf{Sub-Acc} & \textbf{Cls-Acc} & \textbf{Sub-Acc} & \textbf{Cls-Acc} & \textbf{Sub-Acc} \\
\midrule
\multirow{2}{*}{0.6} & \textbf{Ours (group inv)} & \textbf{89.37} & \textbf{90.51} & \textbf{91.23} & \textbf{93.40} & \textbf{90.34} & \textbf{94.29} & \textbf{90.82} & \textbf{94.43} & \textbf{87.40} & \textbf{93.91} \\
& Ours (contrastive) & 76.28 & 53.30 & 68.70 & 56.29 & 65.49 & 58.90 & 70.19 & 60.86 & 62.65 & 62.14 \\
\midrule
\multirow{2}{*}{0.8} & \textbf{Ours (group inv)} & \textbf{75.73} & \textbf{37.77} & \textbf{76.21} & \textbf{42.57} & \textbf{80.95} & \textbf{45.92} & \textbf{76.33} & \textbf{47.44} & \textbf{76.30} & \textbf{48.07} \\
& Ours (contrastive) & 41.52 & 24.24 & 43.23 & 25.47 & 49.79 & 26.85 & 52.00 & 28.49 & 46.13 & 30.15 \\
\bottomrule
\end{tabular}%
}
\end{table*}

\end{document}